\documentclass{custom}

\graphicspath{{figures/}{images/}}

\usepackage{amsmath,amsfonts,amssymb,bm,dsfont}

\DeclareMathOperator{\Law}{Law}

\def\eqref#1{(\ref{#1})}

\DeclareMathAlphabet{\mathsfit}{\encodingdefault}{\sfdefault}{m}{sl}
\SetMathAlphabet{\mathsfit}{bold}{\encodingdefault}{\sfdefault}{bx}{n}

\def\1{{\mathds{1}}}

\newcommand{\E}{\mathbb{E}} 

\newcommand{\R}{\mathbb{R}}

\newcommand{\KL}{D_{\mathrm{KL}}}

\DeclareMathOperator*{\argmax}{arg\,max}

\usepackage{mathtools}
\usepackage{array}
\usepackage{xcolor}
\usepackage{textcomp}
\usepackage{url}

\crefformat{equation}{(#2#1#3)}
\crefrangeformat{equation}{(#3#1#4) to (#5#2#6)}
\crefmultiformat{equation}{(#2#1#3)}{ and (#2#1#3)}{, (#2#1#3)}{ and (#2#1#3)}

\newcommand{\mathd}{\mathrm{d}}

\renewcommand\contributionformat[2][]{{\footnotesize\color{slate}\if\relax#1\relax\else$^{#1}$\fi#2}}

\title{Reinforce Adjoint Matching}
\subtitle{Scaling RL Post-Training of Diffusion and Flow-Matching Models}
\author[1]{Andreas Bergmeister}
\author[1,2]{Stefanie Jegelka}
\author[3]{Nikolas Nüsken}
\authorbreak
\author[4,\dagger]{Carles Domingo-Enrich}
\author[5,\dagger]{Jakiw Pidstrigach}
\affiliation[1]{TU Munich, MCML}
\affiliation[2]{MIT CSAIL}
\affiliation[3]{King's College London}
\affiliation[4]{Microsoft Research New England}
\affiliation[5]{\mbox{University of Oxford}}
\contribution{$^\dagger$Joint last authors}
\metadata[Code]{\url{https://github.com/AndreasBergmeister/ram}}
\abstract{%
Diffusion and flow-matching models scale because pretraining is supervised regression: a clean sample is noised analytically, and a model regresses against a closed-form target. 
RL post-training aligns the model with a reward. In image generation, this makes samples compose objects correctly, render text legibly, and match human preferences. 
Existing methods rely on costly SDE rollouts, reward gradients, or surrogate losses, sacrificing pretraining's regression structure.
We show that the structure extends to RL post-training.
Under KL-regularized reward maximization, the optimal generative process tilts the clean-endpoint distribution towards samples with higher reward and leaves the noising law unchanged. Combining this with the adjoint-matching optimality condition and a REINFORCE identity, we derive Reinforce Adjoint Matching (RAM): a consistency loss that corrects the pretraining target with the reward.
At each step, we draw a clean endpoint from the current model, evaluate its reward, noise it as in pretraining, and regress. No SDE rollouts, backward adjoint sweeps, or reward gradients are required. Like the pretraining objective, RAM is simple and scales.
On Stable Diffusion 3.5M, RAM achieves the highest reward on composability, text rendering, and human preference, reaching Flow-GRPO's peak reward in up to $50\times$ fewer training steps.
}

\newcommand{\preprinttitlefigureheaderpair}{%
  \makebox[0.497\linewidth][c]{\small\textsf{\textbf{SD3.5M}}}%
  \hspace{0.004\linewidth}%
  \makebox[0.497\linewidth][c]{\small\textsf{\textbf{RAM}}}%
}

\newcommand{\preprinttitlefigurepair}[3]{%
  \includegraphics[width=0.49\linewidth]{#1}%
  \hspace{0.003\linewidth}%
  \includegraphics[width=0.49\linewidth]{#2}\par
  \vspace{0.03em}
  {\scriptsize \itshape #3\par}%
}

\newcommand{\preprinttitlefiguretaskcolumn}[8]{%
  \begin{minipage}[t]{0.315\textwidth}
    \centering
    {\small\sffamily\bfseries #1\par}%
    \vspace{0.02em}
    \includegraphics[width=0.96\linewidth]{#2}\par
    \vspace{0.22em}
    \preprinttitlefigureheaderpair\par
    \vspace{0.12em}
    \preprinttitlefigurepair{#3}{#4}{#5}
    \vspace{0.12em}
    \preprinttitlefigurepair{#6}{#7}{#8}
  \end{minipage}%
}

\newcommand{\preprinttitlefigurecontent}{%
  \begin{minipage}[c]{\textwidth}
    \centering
    \preprinttitlefiguretaskcolumn
      {Compositional Image Generation}
      {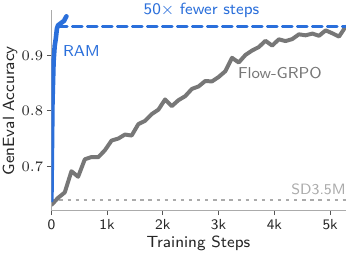}
      {geneval_truck_refrigerator_base.png}{geneval_truck_refrigerator_ram.png}{truck left of refrigerator}
      {geneval_red_zebra_base.png}{geneval_red_zebra_ram.png}{red zebra}%
    \hfill
    \preprinttitlefiguretaskcolumn
      {Visual Text Rendering}
      {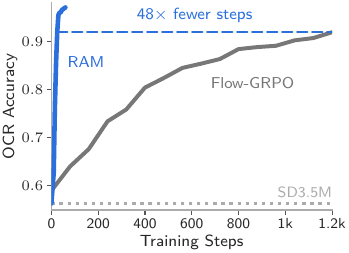}
      {ocr_lion_habitat_base.png}{ocr_lion_habitat_ram.png}{zoo sign: ``Lion Habitat Zone''}
      {ocr_defender_realm_base.png}{ocr_defender_realm_ram.png}{medieval shield: ``Defender Of The Realm''}%
    \hfill
    \preprinttitlefiguretaskcolumn
      {Human Preference Alignment}
      {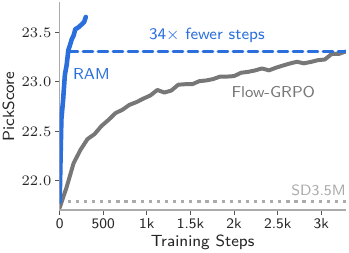}
      {pickscore_wooden_ferrari_base.png}{pickscore_wooden_ferrari_ram.png}{wooden Ferrari}
      {pickscore_cyberpunk_casablanca_base.png}{pickscore_cyberpunk_casablanca_ram.png}{cyberpunk Casablanca}\par
  \end{minipage}%
}

\newcommand{\preprinttitlefigure}{%
  \par\vspace{0.08cm}
  \noindent\begin{minipage}{\textwidth}
    \centering
    \preprinttitlefigurecontent\par
    \vspace{-0.12em}
    {\captionsetup{font={footnotesize,it,color=slate}}\captionof{figure}{Top: RAM reaches Flow-GRPO's peak training reward in up to $50\times$ fewer training steps (with even slightly lower per-step compute). Bottom: RAM improves SD3.5M generations on composability, text rendering, and human preference.}\label{fig:title_figure}}
  \end{minipage}\par
}

\makeatletter
\renewcommand{\preprint@titleblock}{%
  {\setlength{\parskip}{0pt}%
   \raggedright
   \nohyphens
   {\setstretch{1.3}\sffamily\fontsize{22}{25}\selectfont\bfseries\color{ink}\@title\par}%
   \ifx\@subtitle\@empty\else
     \vskip 0.18cm
     {\large\sffamily\color{slate}\@subtitle\par}%
   \fi
   \vskip 0.34cm
   {\authorlist\par}%
   \vskip 0.16cm
   {\affiliationlist\ifdefempty{\contributionlist}{}{\quad\contributionlist}\par}%
  }%
}

\renewcommand{\maketitle}{%
  \if@twocolumn
    \twocolumn[%
      \mymaketitle
      \preprinttitlefigure
      \vskip 0.38cm
    ]%
  \else
    \thispagestyle{plain}%
    \mymaketitle
    \preprinttitlefigure
    \vspace{1.0em}%
  \fi
}
\makeatother

\begin{document}

\maketitle
\clearpage

\section{Introduction}

Diffusion and flow-matching models dominate high-fidelity generation in continuous domains~\citep{sohl2015deep,ho2020ddpm,song2020scorebased,lipman2022flowmatching,liu2022rectifiedflow}. They scale because pretraining is supervised regression. A clean sample from the dataset is corrupted by Gaussian noise, and a model regresses against a closed-form target. That is the entire training procedure.

We often want to optimize generation towards a reward function, rather than just matching the data distribution. In image generation, this is what makes a model follow complex prompts~\citep{ghosh2023geneval}, render text legibly, or align with human preferences~\citep{xu2023imagereward}.
In language modeling, reinforcement learning (RL) post-training is what produces today's reasoning systems.

Given a pretrained model and a reward function, the canonical post-training objective is to maximize expected reward while staying close to the pretrained model in a KL sense. For models with tractable likelihoods, such as language models, policy gradients optimize this objective using the log-likelihood of sampled outputs. In diffusion and flow-matching models, sample likelihoods are intractable, so post-training is more challenging. 

Existing methods pay a different price to work around this. Policy-gradient methods make the denoising process stochastic and use Gaussian transition densities to optimize a variational lower bound~\citep{black2023training,fan2023dpok,liu2025flowgrpo}. Stochastic optimal control methods target the KL-regularized objective exactly, but current estimators require reward gradients backpropagated through the sampling process via an adjoint ODE~\citep{uehara2024fine,domingo2024adjoint}. Both approaches sample by SDE rollout, which requires many steps, and for flow-matching models the noise schedule diverges near the noisy end. To sidestep this, recent methods rely on surrogate objectives: \citet{xue2025awm} replace the intractable likelihood with a flow-matching ELBO, and \citet{zheng2025diffusionnft} distill an improvement direction by contrasting positive and negative endpoints.

We show that the regression structure of pretraining extends to RL post-training. Casting post-training as a stochastic optimal-control problem gives a fixed-point condition for the optimal control~\citep{domingo2024adjoint}. From this condition and a REINFORCE identity, we derive a reward-corrected regression target for the model. To scale this target, we use a key structural property of the optimum: it changes which clean samples are likely, but not how a fixed clean sample is noised. The rule that generated training pairs at pretraining is the same at the optimum. This lets us sample an endpoint from the current model using any off-the-shelf sampler, evaluate its reward, and noise it analytically as in pretraining. We reuse each endpoint for multiple noisy samples, amortizing the cost of sampling and reward evaluation. Our method, Reinforce Adjoint Matching (RAM), uses no SDE rollouts, no backward adjoint sweeps, and no reward gradients.

We post-train Stable Diffusion 3.5M on compositional generation (GenEval), visual text rendering (OCR), and human-preference alignment (PickScore).
RAM achieves the highest reward on each task without reward hacking or visible quality degradation. It matches Flow-GRPO's peak reward in $50\times$, $48\times$, and $34\times$ fewer training steps, at slightly lower per-step compute cost. The simple regression that scales pretraining now scales post-training too.

\section{Diffusion and Flow-Matching Models}
\label{sec:diffusion_and_flow}

Diffusion and flow-matching models share a common structure: a \emph{forward} process that corrupts data to noise from time $t{=}0$ to $t{=}1$, and a \emph{backward} process that reverses it. Diffusion models construct both processes explicitly. Flow-matching models specify only the noising kernel and generate by integrating the velocity field (the probability-flow ODE). The associated forward and backward SDEs are nonetheless well-defined.

\paragraph{Forward noising.}
We linearly interpolate between a data distribution~$p_0$ on~$\R^d$ at $t{=}0$ and the Gaussian prior $\mathcal N(0,\,I)$ at $t{=}1$~\citep{liu2022rectifiedflow,lipman2022flowmatching,esser2024scaling}. Non-linear interpolations recover other classical schedules such as DDPM~\citep{ho2020ddpm,sohl2015deep,song2020scorebased}, and all results in this paper generalize.
The interpolation defines a \emph{noising kernel} that sends a clean sample~$X_0\sim p_0$ to a noisy version at time~$t$,
\begin{equation}
  X_t \mid X_0
  \;\sim\;
  \mathcal N\!\bigl((1-t)\,X_0,\,t^2 I\bigr).
  \label{eq:noising_kernel}
\end{equation}
The kernel is realized by the linear forward SDE
\begin{equation}
  \mathd X_t
  =
  \kappa_t\,X_t\,\mathd t
  +
  \sigma_t\,\mathd B_t,
  \qquad
  X_0 \sim p_0,
  \label{eq:forward_process}
\end{equation}
with $\kappa_t = -1/(1-t)$, $\sigma_t^{2} = 2t/(1-t)$, and $B_t$ a standard Brownian motion on $\R^d$.
We write~$p_t$ for the marginal density of~$X_t$.

\paragraph{Velocity matching.}
The standard choice in large-scale generative modeling~\citep{liu2022rectifiedflow,esser2024scaling} is to learn a model~$v^\theta$ of the \emph{velocity field}
\begin{equation}
  v_t(x)
  =
  \E\!\left[\epsilon - X_0
             \;\middle|\; X_t=x\right],
  \qquad
  X_0 \sim p_0, \quad
  \epsilon \sim \mathcal N(0,\,I).
  \label{eq:velocity_field}
\end{equation}
By the tower property, regressing $v_t^\theta$ against the random target inside~\eqref{eq:velocity_field} recovers the conditional expectation. Because $X_t$ is an affine function of $X_0$ and $\epsilon$, each training pair is cheap to construct from a data sample and independent noise. The flow-matching loss is

\begin{graybox}
\begin{equation}
  \begin{gathered}
    \mathcal L_{\mathrm{FM}}(\theta)
    =
    \E_t\!\left[
      \bigl\|
        v_t^\theta(X_t)
        -
        (\epsilon - X_0)
      \bigr\|^2
    \right], \\
    \text{where }
    X_0 \sim p_0,\quad
    \epsilon \sim \mathcal N(0,\,I),\quad
    X_t = (1-t)\,X_0 + t \epsilon.
  \end{gathered}
  \label{eq:flow_matching_loss}
\end{equation}
\end{graybox}

Alternative parametrizations that predict the clean data or the noise instead are equivalent: given $X_t$, any two of the velocity, clean data, and noise determine the third.

\paragraph{Backward generation.}
The time reversal of the forward SDE~\eqref{eq:forward_process}~\citep{anderson1982reverse} is
\begin{equation}
  \mathd X_t
  =
  \bigl(
    \kappa_t X_t - \sigma_t^{2} \nabla_x\!\log p_t(X_t)
  \bigr)\mathd t
  +
  \sigma_t\,\mathd B_t,
  \qquad
  X_1 \sim \mathcal N(0,\,I),
  \label{eq:backward_process}
\end{equation}
which we integrate from~$t{=}1$ to~$t{=}0$ to sample from~$p_0$.
The score is related to the velocity by
\begin{equation}
  \nabla_x\!\log p_t(x)
  =
  \frac{2}{\sigma_t^{2}}
  \bigl(\kappa_t x-v_t(x)\bigr),
  \label{eq:score_from_velocity}
\end{equation}
so a learned velocity model~$v^\theta$ suffices to sample.
More generally, replacing the noise coefficient in~\eqref{eq:backward_process} by any other schedule $\tilde\sigma_t \ge 0$ (with correspondingly adjusted drift) preserves the marginals~$p_t$~\citep{albergo2023stochastic,song2020scorebased}.
The null choice $\tilde\sigma{\equiv}0$ recovers the probability-flow ODE $\mathd X_t = v_t(X_t)\,\mathd t$, which is the default sampler for flow-matching models.

\section{RL Post-Training as Optimal Control}
\label{sec:finetuning_optimal_control}

Given a pretrained generative model with endpoint distribution $p_0^{\mathrm{ref}}$ and a scalar reward function $r:\R^d\to\R$, the canonical KL-regularized target is the \emph{tilted distribution}
\begin{equation}
  p_0^\ast
  =
  \argmax_p
  \bigl\{
    \E_{x\sim p}[r(x)]
    -
    \KL\!\bigl(p \| p_0^{\mathrm{ref}}\bigr)
  \bigr\}
  \;\propto\;
  p_0^{\mathrm{ref}}(x)\exp\bigl(r(x)\bigr).
  \label{eq:tilted_distribution}
\end{equation}
The KL term keeps the post-trained model close to the pretrained reference and within its support. The reward shifts mass toward desirable samples.
The regularization strength is controlled by scaling the reward, which we leave implicit for notational simplicity.

\paragraph{Stochastic optimal control.}
Sampling directly from~$p_0^\ast$ is intractable.
We instead steer the pretrained backward SDE toward it with a drift correction~$u_t$,
\begin{equation}
  \mathd X_t^u
  =
  \bigl(
    b_t^{\mathrm{ref}}(X_t^u)
    +
    \sigma_t u_t(X_t^u)
  \bigr)\mathd t
  +
  \sigma_t\,\mathd B_t,
  \qquad
  X_1^u \sim \mathcal N(0,\,I),
  \label{eq:controlled_process}
\end{equation}
where $b_t^{\mathrm{ref}}(x) = \kappa_t x - \sigma_t^{2} \nabla_x\!\log p_t^{\mathrm{ref}}(x)$ is the drift of the reference backward process~\eqref{eq:backward_process}.
We choose~$u$ to maximize the terminal reward minus a quadratic control cost,
\begin{equation}
  \max_u\;
  \E\!\left[
    r(X_0^u)
    -
    \frac12 \int_0^1 \|u_\tau(X_\tau^u)\|^2\,\mathd\tau
  \right].
  \label{eq:SOC}
\end{equation}
By Girsanov's theorem, this stochastic optimal control problem is equivalent to KL-regularized reward maximization in path space (\Cref{app:ocp_as_kl}).

\paragraph{Structure of the optimal process.}
The next theorem is a standard consequence of stochastic optimal control theory~\citep{pham2009continuous}: optimality is equivalent to tilting the clean-endpoint distribution to $p_0^\ast$ while leaving the conditional law of noisy states given that clean endpoint unchanged.
This recovers the memoryless-schedule result of~\citet{domingo2024adjoint} as the marginal-time statement of a full path-space characterization.

\begin{theorem}[Optimal controlled process]
  \label{thm:optimal_controlled_process}
  Under standard regularity assumptions (\Cref{app:optimal_controlled_process}), a controlled process~$X^u$ solves~\eqref{eq:SOC} if and only if
  \begin{equation}
    X_0^u \sim p_0^\ast
    \quad\text{and}\quad
    \Law\!\left(
      (X_t^u)_{t\in[0,1]}
      \;\middle|\;
      X_0^u = x_0
    \right)
    =
    \Law\!\left(
      (X_t)_{t\in[0,1]}
      \;\middle|\;
      X_0 = x_0
    \right)
    \label{eq:optimal_process}
  \end{equation}
  for $p_0^\ast$-almost every~$x_0$.
  Equivalently, the optimal controlled process is the time reversal of the forward noising~\eqref{eq:forward_process} started from~$p_0^\ast$.
\end{theorem}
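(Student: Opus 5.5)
The plan is to recast \eqref{eq:SOC} as a KL minimization on path space, identify the minimizer explicitly, and then read off its two structural properties. Let $\mathbb P^{\mathrm{ref}}$ denote the law of the uncontrolled reference backward process ($u\equiv 0$) on $C([0,1];\R^d)$, and $\mathbb P^u$ the law of $X^u$. Both start from $\mathcal N(0,I)$ at $t{=}1$ and share the diffusion coefficient $\sigma_t$. Girsanov's theorem (as in \Cref{app:ocp_as_kl}) then gives $\KL(\mathbb P^u\|\mathbb P^{\mathrm{ref}}) = \E\bigl[\tfrac12\int_0^1\|u_\tau(X^u_\tau)\|^2\mathd\tau\bigr]$. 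Hence the objective in \eqref{eq:SOC} equals $\E_{\mathbb P^u}[r(X_0)] - \KL(\mathbb P^u\|\mathbb P^{\mathrm{ref}})$.

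By the Gibbs variational principle (Donsker--Varadhan), this functional is bounded above by $\log \E_{\mathbb P^{\mathrm{ref}}}[e^{r(X_0)}]$. Equality holds if and only if $\mathbb P^u = \mathbb P^\ast$, where
\[
  \frac{\mathd\mathbb P^\ast}{\mathd\mathbb P^{\mathrm{ref}}} = \frac{e^{r(X_0)}}{\E_{\mathbb P^{\mathrm{ref}}}[e^{r(X_0)}]} .
\]
Uniqueness follows from strict convexity of KL, or equivalently from the identity $\text{objective} = \log Z - \KL(\mathbb P^u\|\mathbb P^\ast)$. So $X^u$ is optimal if and only if $\mathbb P^u=\mathbb P^\ast$, provided $\mathbb P^\ast$ is actually realized by some admissible control. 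That proviso is the step I expect to be the main obstacle.

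To handle it, I will build the control explicitly. Set $h_t(x) = \E[e^{r(X_0)}\mid X_t = x]$ under the reference process. This is a space-time harmonic function for the reference generator, running backward in time from $t{=}1$ to $0$. Doob's $h$-transform shows that $\mathbb P^\ast$ is the law of \eqref{eq:controlled_process} with $u^\ast_t = \sigma_t\nabla\log h_t$. The regularity assumptions (boundedness or growth conditions on $r$, smoothness of $p^{\mathrm{ref}}_t$, and the $t\to 0$ singularity of $\kappa_t,\sigma_t$) are what make $u^\ast$ admissible and Novikov-type conditions hold. I would cite \citet{pham2009continuous} for the verification theorem rather than reprove it.

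Next I characterize $\mathbb P^\ast$ through disintegration. Because the density $e^{r(X_0)}/Z$ depends on the path only through $X_0$, the conditional law of the whole path given $X_0=x_0$ is the same under $\mathbb P^\ast$ and under $\mathbb P^{\mathrm{ref}}$. The $X_0$-marginal of $\mathbb P^\ast$ is $p_0^{\mathrm{ref}}e^{r}/Z = p_0^\ast$. Conversely, any path measure with marginal $p_0^\ast$ and those reference conditionals has exactly this density, and so equals $\mathbb P^\ast$. This proves the "if and only if" in \eqref{eq:optimal_process}.

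Finally, the reference backward process is the time reversal of the forward SDE \eqref{eq:forward_process} started at $p^{\mathrm{ref}}_0$. Therefore its bridge given $X_0=x_0$ is the forward process started at $x_0$. This requires the reversal to hold pathwise, not only marginally, which \citet{anderson1982reverse} gives under the regularity assumptions. Mixing these forward bridges over $x_0\sim p_0^\ast$ yields the forward noising started from $p_0^\ast$, which gives the "equivalently" clause.
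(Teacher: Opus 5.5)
Your argument is the paper's proof. Girsanov reduces \eqref{eq:SOC} to the Gibbs problem, the maximizer's density $e^{r(X_0)}/Z$ depends only on $X_0$ and so leaves the reference bridge given $X_0=x_0$ unchanged, the converse is the same Radon--Nikodym identification plus uniqueness, and time reversal gives the last clause; your Doob $h$-transform step usefully makes explicit the realizability that the paper takes for granted.

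Two details in that step need fixing. First, the paper uses the backward-time convention $X_{t-\delta}=X_t-b_t(X_t)\,\delta+\Delta W_t$, so the $h$-transform adds $\sigma_t^2\nabla\log h_t$ to the drift $-b_t$ in the direction of integration. The realizing control is therefore $u_t^\ast=-\sigma_t\nabla\log h_t$, not $+\sigma_t\nabla\log h_t$, which is consistent with \Cref{thm:adjoint_matching} and $V_t^\ast=\log h_t$. Second, the transformed law has source marginal $p_1^{\mathrm{ref}}h_1/Z$. This equals the prescribed $\mathcal N(0,I)$ only because $h_1\equiv Z$, since $X_1=\epsilon$ is independent of $X_0$ under the noising kernel. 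It is this memoryless property, not a growth condition on $r$, that discharges your proviso. Also, $\kappa_t$ and $\sigma_t$ blow up as $t\to1$, not $t\to0$.
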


The theorem is the precise sense in which RL post-training changes \emph{which} clean samples are preferred, but not the noising rule that connects a fixed clean sample to its noisy versions.
The reward tilt acts only on~$X_0$, and the conditional trajectory law given~$X_0$ depends only on the interpolation schedule, not on the data distribution.

Two consequences follow.
First, the conditional law of a noisy state given the clean endpoint is the same analytic Gaussian kernel as in pretraining~\eqref{eq:noising_kernel}. Only the endpoint distribution changes, from $p_0$ to $p_0^\ast$.
Second, the optimal marginal $p_t^\ast$, velocity $v_t^\ast$, and score $\nabla_x\!\log p_t^\ast$ take the same functional forms as~\eqref{eq:velocity_field} and~\eqref{eq:score_from_velocity}, with $p_0^\ast$ in place of $p_0$.

\paragraph{Adjoint matching.}
To reach this fixed point in practice, we introduce the Bellman value function
\begin{equation}
  V_t^u(x)
  =
  \E\!\left[
    r(X_0^u)
    -
    \frac12 \int_0^t \|u_\tau(X_\tau^u)\|^2\,\mathd\tau
    \;\middle|\;
    X_t^u = x
  \right],
  \label{eq:value_function}
\end{equation}
and its spatial gradient $A_t^u(x) \coloneqq \nabla_x V_t^u(x)$, which we call the \emph{adjoint}.
The following verification result turns optimal control into a self-consistency condition on~$u$.

\begin{theorem}[Adjoint matching]
  \label{thm:adjoint_matching}
  Under standard regularity assumptions (\Cref{app:verification_fixed_points}), a control~$u$ is optimal if and only if
  \begin{equation}
    u_t(x)
    =
    -\sigma_t A_t^u(x)
    \qquad
    \text{for all }(t,x)\in[0,1]\times\R^d.
    \label{eq:value_function_matching}
  \end{equation}
\end{theorem}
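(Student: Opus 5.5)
We prove both directions through the dynamic-programming PDE for the value function. The key step is to show that the fixed-point condition~\eqref{eq:value_function_matching} turns the \emph{linear} PDE satisfied by $V^u$ for a fixed control into the \emph{nonlinear} Hamilton--Jacobi--Bellman (HJB) equation of~\eqref{eq:SOC}. A Cole--Hopf transform then gives uniqueness.

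Since the controlled process~\eqref{eq:controlled_process} runs from $t{=}1$ to $t{=}0$, we first rewrite it in reversed time $s=1-t$ to get a standard forward SDE. The terminal condition at $t{=}0$ is $V_0^u=r$, and the running cost accumulated between $t$ and $0$ is $\int_0^t$, matching~\eqref{eq:value_function}.

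\textbf{Step 1 (linear PDE for fixed $u$).} Fix a control $u$, assumed regular enough that $V^u$ is $C^{1,2}$ with polynomial growth. By Feynman--Kac, $V^u$ solves a linear parabolic equation in $t$. Its form is
\begin{equation*}
  \partial_t V^u_t = -\bigl(b^{\mathrm{ref}}_t+\sigma_t u_t\bigr)\cdot\nabla V^u_t - \tfrac{\sigma_t^2}{2}\Delta V^u_t - \tfrac12\|u_t\|^2 ,
  \qquad V^u_0=r .
\end{equation*}
The signs come from the time reversal, and I would recheck them carefully by applying Itô's formula to $V^u_t(X^u_t)$ along decreasing $t$. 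They fix the sign in $u=-\sigma A^u$.

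\textbf{Step 2 (HJB for the optimum).} The optimal value $V^\ast$ solves the same equation with an extremum over $u$ inserted pointwise. The $u$-dependent part $-\sigma_t u\cdot\nabla V-\tfrac12\|u\|^2$ is a concave quadratic in $u$. Completing the square shows the optimum is attained at $u=-\sigma_t\nabla V$, so $V^\ast$ satisfies the HJB equation
\begin{equation*}
  \partial_t V_t = -b^{\mathrm{ref}}_t\cdot\nabla V_t - \tfrac{\sigma_t^2}{2}\Delta V_t + \tfrac{\sigma_t^2}{2}\|\nabla V_t\|^2 ,
  \qquad V_0=r .
\end{equation*}
The standard verification theorem~\citep{pham2009continuous} then says two things. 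If $V$ is a classical solution of this HJB equation, it equals $V^\ast$. Moreover, the feedback $u=-\sigma\nabla V$ is optimal, and it is the unique optimal control up to null sets.

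\textbf{Step 3 (both directions).} ($\Rightarrow$) Let $u$ be optimal. Then $V^u=V^\ast$, and by the uniqueness part of Step~2, $u_t=-\sigma_t\nabla V^\ast_t=-\sigma_t A^u_t$.

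($\Leftarrow$) Suppose $u=-\sigma A^u$. Substitute this into the linear PDE of Step~1. The two terms $-\sigma_t u\cdot\nabla V^u$ and $-\tfrac12\|u\|^2$ combine to $+\tfrac{\sigma_t^2}{2}\|\nabla V^u\|^2$, so $V^u$ solves the HJB equation of Step~2 with the same terminal data $V_0=r$. Uniqueness then gives $V^u=V^\ast$, and hence $u$ is the optimal feedback.

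\textbf{Main obstacle.} The difficulty is uniqueness of the HJB solution: the equation is nonlinear, and $\sigma_t^2=2t/(1-t)$ blows up as $t\to1$. To handle this, I would use the Cole--Hopf transform $\varphi=e^{V}$. A direct computation shows that $\varphi$ solves the \emph{linear} backward Kolmogorov equation of the reference process with terminal data $e^{r}$. Uniqueness in a growth class then follows from the Feynman--Kac representation $\varphi_t(x)=\E[e^{r(X_0)}\mid X_t=x]$ under the reference dynamics.

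The singularity at $t{=}1$ is handled inside the "standard regularity assumptions". I would work on $[0,1-\delta]$ and let $\delta\to0$, using that the reference process is the exact time reversal of~\eqref{eq:forward_process}. Under this reversal the conditional expectation stays well defined.
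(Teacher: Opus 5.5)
Your argument is the paper's proof in Appendix~\ref{app:verification_fixed_points}. You write the linear Feynman--Kac equation for $V^u$ under a fixed control. Substituting $u_t=-\sigma_t\nabla_x V^u_t$ collapses the drift and running-cost terms into $+\tfrac{\sigma_t^2}{2}\|\nabla_x V^u_t\|^2$, which turns that equation into the HJB equation. For the converse, $a\mapsto-\sigma_t a^\top\nabla_x V-\tfrac12\|a\|^2$ is strictly concave with unique maximizer $a=-\sigma_t\nabla_x V$. The paper simply assumes the HJB equation has a unique $C^{1,2}$ solution. Your Cole--Hopf step reduces that uniqueness to the linear reference Kolmogorov equation and identifies $V^\ast_t(x)=\log\E[e^{r(X_0)}\mid X_t=x]$, consistent with the Gibbs formulation in Appendix~\ref{app:ocp_as_kl}. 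That is a genuine strengthening of the same route, not a different one. For ($\Rightarrow$) the paper argues more directly than your appeal to a uniqueness-of-optimal-control clause, which the standard verification theorem does not supply and which would only give equality up to null sets. Since $V^{u^\ast}$ satisfies both the linear equation with $u^\ast$ and the HJB equation, subtracting them shows $u^\ast$ attains the supremum at every $(t,x)$. Strict concavity then gives $u^\ast_t(x)=-\sigma_t\nabla_x V^{u^\ast}_t(x)$ pointwise.

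The sign check you deferred does turn up an error, in the second-order term. In reversed time $s=1-t$ the controlled process is $\mathd Y_s=-(b^{\mathrm{ref}}_{1-s}+\sigma_{1-s}u_{1-s})(Y_s)\,\mathd s+\sigma_{1-s}\,\mathd W_s$. Feynman--Kac for $V^u_{1-s}$ together with $\partial_s=-\partial_t$ gives $\partial_t V^u_t=-(b^{\mathrm{ref}}_t+\sigma_t u_t)^\top\nabla_x V^u_t+\tfrac{\sigma_t^2}{2}\Delta_x V^u_t-\tfrac12\|u_t\|^2$, which is the paper's equation with generator $\mathcal B^u_t$. Your version has $-\tfrac{\sigma_t^2}{2}\Delta_x V^u_t$, which would be a backward heat equation run forward from $V^u_0=r$. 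Your drift sign is right, so the completing-the-square step and the sign in $u=-\sigma_t A^u$ are unaffected. The Cole--Hopf step is affected, however. With the correct sign the HJB equation reads $\partial_t V=-(b^{\mathrm{ref}}_t)^\top\nabla_x V+\tfrac{\sigma_t^2}{2}(\Delta_x V+\|\nabla_x V\|^2)$, and $\varphi=e^{V}$ solves the linear equation $\partial_t\varphi=-(b^{\mathrm{ref}}_t)^\top\nabla_x\varphi+\tfrac{\sigma_t^2}{2}\Delta_x\varphi$. With your sign the two $\|\nabla_x V\|^2$ contributions add instead of cancelling, and the transform does not linearize.
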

Following \citet{domingo2024adjoint}, we turn this fixed-point condition into on-policy training: estimate the adjoint~$A_t^u$ under the current control, and regress~$u_t$ against the resulting target.
In practice we parametrize the velocity field rather than~$u$ directly, so we derive a regression target for~$v_t^\theta$ via the control--velocity relation~\eqref{eq:control_to_velocity}.

\section{Reinforce Adjoint Matching}
\label{sec:ram}

\begin{figure}[t]
  \centering
  \includegraphics[width=\linewidth]{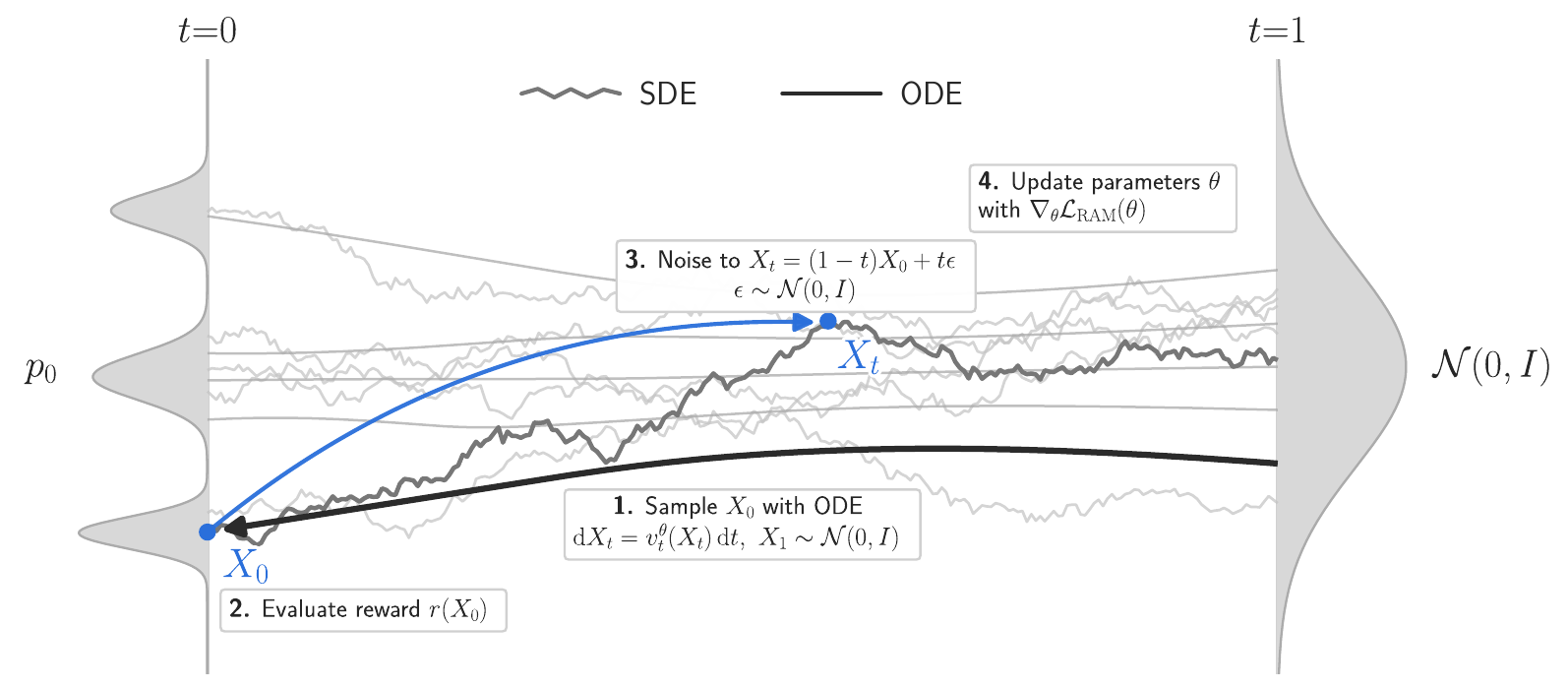}
  \caption{
    \textbf{RAM training.}
    We draw a clean endpoint with an ODE sampler, evaluate its reward, and noise it as in pretraining to obtain training states.
    This avoids SDE rollouts, and reusing each endpoint across many noise draws amortizes the cost of sampling and reward evaluation.
  }
  \label{fig:ram_overview}
\end{figure}

\emph{Reinforce Adjoint Matching} (RAM) is an on-policy consistency loss for the velocity field that requires no reward gradients.
By linearity of expectation, the value function~\eqref{eq:value_function} splits into an endpoint-reward term and an integrated running-cost term.
Differentiating the endpoint term in~$x$ naively produces a reward gradient~$\nabla r$, which we avoid since~$r$ may be non-differentiable.
Applying the log-derivative (REINFORCE) identity replaces this gradient with the backward bridge score weighted by the reward, yielding the exact decomposition
\begin{equation}
    A_t^u(x)
    =
    \underbrace{
      \E\!\left[
        r(X_0^u) \nabla_x\!\log p_{0|t}^u(X_0^u \mid x)
        \;\middle|\;
        X_t^u = x
      \right]
    }_{\text{reward term}}
    -
    \frac12
    \underbrace{
      \E\!\left[
        \nabla_x\!\int_0^t \|u_\tau(X_\tau^u)\|^2\,\mathd\tau
        \;\middle|\;
        X_t^u = x
      \right]
    }_{\text{path-cost correction}}.
  \label{eq:exact_adjoint_decomposition}
\end{equation}
The reward term admits a closed-form Monte Carlo estimator, which we develop below.
For the path-cost correction, \Cref{sec:path_cost} presents several estimators trading off exactness, variance, and computational cost.
At image scale, either variance is too high or the estimator requires a backward adjoint sweep along a stored SDE rollout, bringing numerical instability and substantial per-step compute. We therefore use the approximation
\begin{equation}
  A_t^u(x)
  \approx
  \E\!\left[
    r(X_0^u) \nabla_x\!\log p_{0|t}^u(X_0^u \mid x)
    \;\middle|\;
    X_t^u = x
  \right].
  \label{eq:ram_reward_proxy_identity}
\end{equation}
This approximation coincides with the full adjoint at initialization, where $u{\equiv}0$ makes the path cost vanish, and exactly for a Gaussian reference under a linear reward (\Cref{app:gaussian_case}). \Cref{sec:ram_fp_analysis} relates the resulting RAM fixed point to the KL-regularized optimum more generally.
The pretrained reference remains an explicit anchor in the regression target~\eqref{eq:ram_velocity_loss}, penalizing deviation from $v^{\mathrm{ref}}$ throughout training.
In practice, this prevents reward hacking despite RAM achieving the highest task reward among all baselines (\Cref{sec:experiments}).

\paragraph{Endpoint sampling and analytic noising.}
Equation~\eqref{eq:ram_reward_proxy_identity} requires joint samples $(X_0^u, X_t^u)$. In principle these come from simulating the controlled SDE~\eqref{eq:controlled_process}, which requires many steps and is numerically unstable for flow-matching models. \Cref{thm:optimal_controlled_process} opens up a much cheaper route: at the optimum, the bridge from clean endpoint to noisy state is the pretraining kernel~\eqref{eq:noising_kernel}. We therefore draw an on-policy endpoint $X_0$ with any off-the-shelf ODE sampler and noise it: $X_t = (1-t) X_0 + t \epsilon$.

This recovers the controlled-SDE joint $(X_0^u, X_t^u)$ exactly at initialization and at the optimum. Since the training objective is a fixed-point condition, exactness is formally needed only at the optimum. In practice, endpoint sampling works well throughout training. Each endpoint then yields $K$ independent training states at the cost of one model sample and one reward query (\Cref{fig:ram_overview} and~\Cref{alg:ram}). Where existing SDE-based methods produce correlated states from a single rollout, RAM's $K$ states are conditionally independent given the endpoint. This translates into more gradient signal per step (\Cref{sec:experiments}).

\paragraph{Bayes bridge score.}
The reward term in~\eqref{eq:ram_reward_proxy_identity} requires the backward bridge score.
By Bayes' rule, and noting that $p_0(x_0)$ is constant in~$x_t$, $\nabla_{x_t}\!\log p_{0|t}(x_0 \mid x_t) = \nabla_{x_t}\!\log p_{t|0}(x_t \mid x_0) - \nabla_{x_t}\!\log p_t(x_t)$.
The forward-bridge score is analytic from~\eqref{eq:noising_kernel} and the marginal score follows from~\eqref{eq:score_from_velocity}, giving a closed-form expression in $v_t$.

\begin{proposition}[Bayes bridge score]
  \label{prop:ram_bridge_score}
  For the noising kernel~\eqref{eq:noising_kernel}, velocity field~\eqref{eq:velocity_field}, $0 < t \le 1$, and $\epsilon \coloneqq (x_t - (1-t) x_0)/t$,
  \begin{equation}
    \nabla_{x_t}\!\log p_{0|t}(x_0 \mid x_t)
    =
    \frac{1-t}{t}
    \bigl(
      v_t(x_t)
      -
      (\epsilon - x_0)
    \bigr).
    \label{eq:general_bayes_bridge_score}
  \end{equation}
\end{proposition}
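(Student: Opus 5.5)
Following the remark preceding the statement, I would apply Bayes' rule and then compute the two resulting scores explicitly. Since $p_{0|t}(x_0\mid x_t) = p_{t|0}(x_t\mid x_0)\,p_0(x_0)/p_t(x_t)$ and $p_0(x_0)$ does not depend on $x_t$, we have
\begin{equation*}
  \nabla_{x_t}\log p_{0|t}(x_0\mid x_t) = \nabla_{x_t}\log p_{t|0}(x_t\mid x_0) - \nabla_{x_t}\log p_t(x_t),
\end{equation*}
valid wherever $p_t(x_t)>0$. Positivity holds for $0<t\le 1$ because the Gaussian kernel \eqref{eq:noising_kernel} has full support. The plan is to write both terms in terms of $\epsilon$, $x_0$ and $v_t(x_t)$.

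For the forward-bridge term, \eqref{eq:noising_kernel} gives
\begin{equation*}
  \nabla_{x_t}\log p_{t|0}(x_t\mid x_0) = -\frac{x_t-(1-t)x_0}{t^2} = -\frac{\epsilon}{t}.
\end{equation*}
For the marginal score I would use \eqref{eq:score_from_velocity} with $\kappa_t=-1/(1-t)$ and $\sigma_t^2 = 2t/(1-t)$. Then $2/\sigma_t^2 = (1-t)/t$, and
\begin{equation*}
  \nabla\log p_t(x_t) = \frac{1-t}{t}\Bigl(-\frac{x_t}{1-t} - v_t(x_t)\Bigr) = -\frac{x_t}{t} - \frac{1-t}{t}\,v_t(x_t).
\end{equation*}
The case $t=1$ is degenerate in $\kappa_t$ and $\sigma_t$. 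There I would instead use the last expression directly: $p_1=\mathcal N(0,I)$ has score $-x_1$, and both sides of the claim vanish.

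Next I subtract and substitute $x_t = (1-t)x_0 + t\epsilon$:
\begin{equation*}
  -\frac{\epsilon}{t} + \frac{(1-t)x_0 + t\epsilon}{t} + \frac{1-t}{t}\,v_t
  = \frac{1-t}{t}\bigl(v_t + x_0 - \epsilon\bigr)
  + \frac{-\epsilon + t\epsilon + (1-t)\epsilon}{t}.
\end{equation*}
The last fraction is zero, which yields \eqref{eq:general_bayes_bridge_score}.

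The one step I would check carefully is \eqref{eq:score_from_velocity}, since everything rests on it. To verify it independently, I would use Tweedie's formula for the kernel \eqref{eq:noising_kernel}:
\begin{equation*}
  \nabla\log p_t(x) = -\frac{x - (1-t)\,\E[X_0\mid X_t=x]}{t^2} = -\frac{\E[\epsilon\mid X_t=x]}{t}.
\end{equation*}
Combining $x = (1-t)\E[X_0\mid x] + t\,\E[\epsilon\mid x]$ with $v_t = \E[\epsilon\mid x] - \E[X_0\mid x]$ gives $\E[\epsilon\mid x] = x + (1-t)v_t$. This matches the score expression above, so the identity holds for any $p_0$. In particular, it holds with $p_0^\ast$ or the current model's endpoint law in place of $p_0$, as needed downstream.
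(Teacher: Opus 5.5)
Your proof is correct and takes essentially the paper's route. Bayes' rule splits the bridge score into the Gaussian forward-kernel score and the marginal score, written via \eqref{eq:score_from_velocity} as $-(x_t+(1-t)v_t(x_t))/t$, and substituting $x_t=(1-t)x_0+t\epsilon$ gives the claim; the paper does this for a general intermediate time $s$ and then sets $s=0$, while you work at $s=0$ directly and add a Tweedie check of \eqref{eq:score_from_velocity} and the $t=1$ case, which the paper leaves implicit.
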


The full algebra is given in~\Cref{app:bayes_bridge_score}.
The identity is exact for the uncontrolled reference process at initialization, and by \Cref{thm:optimal_controlled_process} it also holds for the optimal process.
Away from those points we substitute $v_t^\theta$ for $v_t$ as a plug-in approximation.

\paragraph{RAM objective.}
We train the velocity field $v^\theta$ rather than parametrizing $u_t$ directly, initializing $v^\theta = v^{\mathrm{ref}}$.
The two are related by
\begin{equation}
  \sigma_t u_t
  =
  2\bigl(v_t^\theta - v_t^{\mathrm{ref}}\bigr),
  \label{eq:control_to_velocity}
\end{equation}
matching the true control--velocity relation at the optimum (\Cref{app:proof_optimal_score_corollary}).
Substituting~\eqref{eq:ram_reward_proxy_identity} and~\eqref{eq:general_bayes_bridge_score} into the adjoint-matching condition~\eqref{eq:value_function_matching} gives a Monte Carlo target for the velocity field.
The time-dependent prefactors cancel (algebra in~\Cref{app:ram_control_space}), yielding the RAM loss

\begin{graybox}
\begin{equation}
  \begin{gathered}
    \mathcal L_{\mathrm{RAM}}(\theta)
    =
    \E_t\!\left[
      \left\|
        v_t^\theta(X_t)
        -
        \mathrm{sg}\!\left(
          v_t^{\mathrm{ref}}(X_t)
          +
          r(X_0)\bigl(
            (\epsilon - X_0)
            -
            v_t^\theta(X_t)
          \bigr)
        \right)
      \right\|^2
    \right], \\
    \text{where }
    X_0 \sim p_0^\theta,\quad
    \epsilon \sim \mathcal N(0,\,I),\quad
    X_t = (1-t) X_0 + t \epsilon.
  \end{gathered}
  \label{eq:ram_velocity_loss}
\end{equation}
\end{graybox}

Here $\mathrm{sg}(\cdot)$ is the stop-gradient operator, and we also treat the endpoint sampling as constant, so gradients are taken only with respect to the leading $v_t^\theta(X_t)$ term. A natural alternative is to form the regression loss in control space and convert via~\eqref{eq:control_to_velocity}. This introduces a factor $4/\sigma_t^{2}$ that downweights the early timesteps that shape the trajectory. Avoiding this scaling was necessary for stable training (\Cref{app:ram_control_space}).

\begin{algorithm}[t]
  \caption{RAM}
  \label{alg:ram}
  \begin{algorithmic}[1]
    \Require parameters $\theta$ initialized from the pretrained reference field $v^{\mathrm{ref}}$, reward $r$, targets per endpoint $K$
    \While{not converged}
      \State sample $x_0$ from $p_0^\theta$ \Comment{any sampler}
      \State $r_0 \gets r(x_0)$
      \For{$k = 1, \ldots, K$} \textbf{in parallel}
        \State sample time $t_k$ and noise $\epsilon_k \sim \mathcal N(0,\,I)$
        \State $x_k \gets (1-t_k) x_0 + t_k \epsilon_k$
        \State $\hat v_k \gets v_{t_k}^{\mathrm{ref}}(x_k) + r_0\bigl((\epsilon_k - x_0)-v_{t_k}^\theta(x_k)\bigr)$
      \EndFor
      \State update $\theta$ with $\nabla_\theta \frac{1}{K}\sum_{k=1}^K\|v_{t_k}^\theta(x_k)-\mathrm{sg}\!\left(\hat v_k\right)\|^2$ \Comment{$\mathrm{sg}$ stops gradients}
    \EndWhile
  \end{algorithmic}
\end{algorithm}

\subsection{Relating the RAM fixed point to the KL optimum}
\label{sec:ram_fp_analysis}

The training objective~\eqref{eq:ram_velocity_loss} implies the fixed-point equation
\begin{equation}
  v_t^\theta(x) - v_t^{\mathrm{ref}}(x)
  =
  \E\!\left[
    r(X_0)\bigl((\epsilon - X_0) - v_t^\theta(x)\bigr)
    \;\middle|\;
    X_t = x
  \right],
  \quad X_0 \sim p_0^\theta,
  \label{eq:ram_fp}
\end{equation}
where $p_0^\theta$ is the endpoint distribution generated by integrating the ODE under $v_t^\theta$.
To relate this self-consistency condition to the KL-regularized optimum, we connect the reference and the optimum through a path of exponentially tilted endpoints.

\begin{lemma}[Path-integral characterization of the optimum]
\label{lem:tilt_path}
For $\lambda \in [0,\,1]$, let $p_0^\lambda(x_0) \propto p_0^{\mathrm{ref}}(x_0)\, e^{\lambda r(x_0)}$ interpolate between $p_0^0 = p_0^{\mathrm{ref}}$ and $p_0^1 = p_0^\ast$, and let $v_t^\lambda$ denote the velocity field for endpoint distribution $p_0^\lambda$. Then
\begin{equation}
  \partial_\lambda v_t^\lambda(x)
  =
  \E\!\left[
    r(X_0)\bigl((\epsilon - X_0) - v_t^\lambda(x)\bigr)
    \;\middle|\;
    X_t = x
  \right],
  \quad X_0 \sim p_0^\lambda,
  \label{eq:tilt_velocity_ode}
\end{equation}
and integrating over $\lambda \in [0,\,1]$ yields
\begin{equation}
  v_t^\ast(x) - v_t^{\mathrm{ref}}(x)
  =
  \int_0^1
  \E\!\left[
    r(X_0)\bigl((\epsilon - X_0) - v_t^\lambda(x)\bigr)
    \;\middle|\;
    X_t = x
  \right]
  \mathd\lambda,
  \quad X_0 \sim p_0^\lambda.
  \label{eq:exact_tilt_integral}
\end{equation}
\end{lemma}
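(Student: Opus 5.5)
The plan is to write $v_t^\lambda$ as a ratio of integrals against the noising kernel, which itself does not depend on $\lambda$, and then differentiate under the integral sign. Write $\pi(x_0\mid x) \coloneqq p_{0|t}^{\lambda}$ for the posterior of the clean endpoint given $X_t=x$. By~\eqref{eq:velocity_field} and the Gaussian kernel~\eqref{eq:noising_kernel}, the noise is determined by $(x_0,x)$ via $\epsilon=(x-(1-t)x_0)/t$. Hence
\begin{equation*}
  v_t^\lambda(x)
  =
  \frac{\int g(x_0,x)\, p_{t|0}(x\mid x_0)\, p_0^{\mathrm{ref}}(x_0)\, e^{\lambda r(x_0)}\,\mathd x_0}
       {\int p_{t|0}(x\mid x_0)\, p_0^{\mathrm{ref}}(x_0)\, e^{\lambda r(x_0)}\,\mathd x_0},
  \qquad
  g(x_0,x) \coloneqq \frac{x-(1-t)x_0}{t} - x_0 .
\end{equation*}
The normalizing constant of $p_0^\lambda$ cancels between numerator and denominator, so it can be ignored. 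The key structural point is that the kernel $p_{t|0}$ and the target $g$ are both independent of $\lambda$. All $\lambda$-dependence therefore sits in the exponential weight $e^{\lambda r(x_0)}$. This is the same observation underlying \Cref{thm:optimal_controlled_process}: tilting only reweights endpoints.

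Next, differentiate the ratio in $\lambda$. Write $N(\lambda)$ for the numerator and $D(\lambda)$ for the denominator. Then $\partial_\lambda N = \int g\, r\, (\cdots)$ and $\partial_\lambda D = \int r\,(\cdots)$, where $(\cdots)$ is the common integrand weight. The quotient rule gives
\begin{equation*}
  \partial_\lambda v_t^\lambda(x)
  = \E_\lambda[r(X_0)\, g \mid X_t=x] - \E_\lambda[g\mid X_t=x]\,\E_\lambda[r(X_0)\mid X_t=x].
\end{equation*}
This is the familiar covariance identity for exponential families. Since $\E_\lambda[g\mid X_t=x]=v_t^\lambda(x)$ is a deterministic function of $x$, we can pull it inside the conditional expectation. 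Using $g=\epsilon-X_0$, the right-hand side becomes $\E_\lambda[r(X_0)((\epsilon-X_0)-v_t^\lambda(x))\mid X_t=x]$, which is~\eqref{eq:tilt_velocity_ode}. Equation~\eqref{eq:exact_tilt_integral} then follows from the fundamental theorem of calculus on $[0,1]$, using $v_t^0=v_t^{\mathrm{ref}}$ and $v_t^1=v_t^\ast$. The latter identification holds because $p_0^1=p_0^\ast$, and, per the remark after \Cref{thm:optimal_controlled_process}, the optimal velocity has the same functional form~\eqref{eq:velocity_field} with $p_0^\ast$ in place of $p_0$.

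The main obstacle is justifying differentiation under the integral sign and the continuity in $\lambda$ needed for the fundamental theorem of calculus. I would handle this by dominated convergence, assuming for instance that $r$ is bounded, or more generally that $e^{\lambda r}$, $|r|e^{\lambda r}$ and $|x_0|\,|r| e^{\lambda r}$ are integrable against $p_0^{\mathrm{ref}}$ uniformly over $\lambda\in[0,1]$. For $0<t<1$ the Gaussian kernel is bounded and strictly positive, so $D(\lambda)>0$ for every $\lambda$. The $\lambda$-derivatives of the integrands are then dominated by $|r|(1+|g|)\,e^{|r|}\,p_{t|0}\,p_0^{\mathrm{ref}}$, which is integrable under these assumptions. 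This yields smoothness of $\lambda\mapsto v_t^\lambda(x)$ for each fixed $(t,x)$, and the argument closes.
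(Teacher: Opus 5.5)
Your proposal is correct and follows essentially the same route as the paper. The paper differentiates $v_t^\lambda(x)=\E[\epsilon - X_0 \mid X_t = x]$ using the $\lambda$-score of the posterior, $\partial_\lambda \log p_0^\lambda(x_0 \mid X_t = x) = r(x_0) - \E[r(X_0)\mid X_t=x]$, which is your quotient-rule covariance identity in log-derivative form, and then integrates over $\lambda\in[0,1]$ using $v_t^0=v_t^{\mathrm{ref}}$ and $v_t^1=v_t^\ast$; your dominated-convergence justification for differentiating under the integral (e.g.\ bounded $r$, $0<t<1$) is a detail the paper leaves implicit.
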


The proof is given in~\Cref{app:proof_tilt_path}.
Equation~\eqref{eq:ram_fp} has the same form as the integrand of~\eqref{eq:exact_tilt_integral} evaluated at the right endpoint $\lambda{=}1$, with the on-policy distribution $p_0^\theta$ in place of $p_0^\ast$ and $v_t^\theta$ in place of $v_t^\ast$. The integrand itself is the conditional reward-velocity covariance $\mathrm{Cov}\bigl(r(X_0),\,\epsilon - X_0 \;\big|\; X_t = x\bigr)$ with $X_0 \sim p_0^\lambda$. RAM thus replaces the average of this covariance over the path by a single evaluation at $\lambda{=}1$. The rule is exact when $v_t^\lambda$ depends linearly on $\lambda$ along the path, in particular for a Gaussian reference under a linear reward (\Cref{app:gaussian_case}). More generally, RAM and the KL optimum agree to first order in the reward: scaling $r \mapsto \eta r$ and expanding around $\eta{=}0$, both~\eqref{eq:ram_fp} and~\eqref{eq:exact_tilt_integral} reduce at leading order to $\eta\,\mathrm{Cov}\bigl(r(X_0),\,\epsilon - X_0 \;\big|\; X_t = x\bigr)$ with $X_0 \sim p_0^{\mathrm{ref}}$.

\section{Retaining the Path-Cost Correction}
\label{sec:path_cost}

RAM drops the path-cost correction in the adjoint decomposition~\eqref{eq:exact_adjoint_decomposition}. 
To show that this is the right choice for scaling to image generation, we examine the challenges of retaining it.
First, we can preserve RAM's endpoint sampling and analytic noising and estimate the path cost at one random intermediate time. The price is high variance. Alternatively, we can simulate the controlled SDE~\eqref{eq:controlled_process}, store the trajectory, and differentiate the running cost pathwise. The length of pathwise differentiation trades variance against compute. \Cref{tab:estimators} summarizes the landscape.

\begin{table}[h]
  \centering \small
  \begin{tabular}{@{}llll@{}}
    \toprule Estimator & Samples from & Exactness & Trade-off \\ \midrule RAM (\Cref{sec:ram}) & endpoint + analytic noising & biased & drops path cost \\ Random jump & endpoint + analytic noising & exact at init./optimum & high variance \\ Full-horizon w/ Bayes & SDE rollout & exact at init./optimum & pathwise VJPs \\ Full-horizon w/ Malliavin & SDE rollout & exact & pathwise + score VJPs \\ Local & SDE rollout & exact & high variance \\ \bottomrule
  \end{tabular}
  \caption{Estimators for the value-function gradient~$A_t^u$.} \label{tab:estimators}
\end{table}

\paragraph{Endpoint sampling with a random jump.}
We insert a uniformly random intermediate time $s \in [0, t)$ into the analytic noising trajectory. Then $\tfrac{t}{2}\|u_s(X_s)\|^2$ is an unbiased estimate of the integrated path cost. Combined with the Bayes bridge score from intermediate to training state, this yields the jump estimator
\begin{equation}
  \widehat A_t^{\mathrm{jump}}
  =
  \bigl(
    r(X_0) - \tfrac{t}{2} \|u_s(X_s)\|^2
  \bigr)
  \nabla_{x_t}\!\log p_{s|t}(X_s \mid x_t)
  \big|_{x_t = X_t},
  \quad
  s \sim \mathcal U[0,t).
  \label{eq:jump_estimator}
\end{equation}
The estimator is cheap and exact at initialization and at the optimum (\Cref{app:endpoint_jump_derivation}). The price is variance: the entire gradient signal flows through that one scalar prefactor.

\paragraph{Pathwise differentiation along an SDE rollout.}
We simulate the controlled SDE~\eqref{eq:controlled_process} and recover the path-cost gradient by integrating an adjoint ODE backward along the stored trajectory. This is functionally equivalent to differentiating through the SDE solver in autograd, but more memory efficient.

Adjoint integration over the full horizon is expensive and numerically delicate. Dynamic programming lets us replace some of the pathwise integration with a REINFORCE term: at any intermediate time $s$, the adjoint splits into a REINFORCE term over the prefix $[0, s]$ and a pathwise gradient over the suffix $[s, t]$.

\begin{theorem}[Generalized adjoint]
  \label{thm:generalized_ram}
  For every $0 \le s < t \le 1$ and admissible control $u$,
  \begin{equation}
    \nabla_x V_t^u(x)
    =
    \underbrace{
      \E\!\left[
        V_s^u(X_s^u)\,\nabla_x\!\log p_{s|t}^u(X_s^u \mid x)
        \;\middle|\; X_t^u = x
      \right]
    }_{\text{REINFORCE on prefix}}
     -
    \frac{1}{2}
    \underbrace{
      \E\!\left[
        \nabla_x\!\int_s^t \|u_\tau(X_\tau^u)\|^2\,\mathd\tau
        \;\middle|\; X_t^u = x
      \right]
    }_{\text{pathwise suffix}}.
    \label{eq:generalized_adjoint}
  \end{equation}
\end{theorem}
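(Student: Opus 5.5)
The plan is to derive \eqref{eq:generalized_adjoint} from the Markov property (dynamic programming) applied to the value function \eqref{eq:value_function}, and then to differentiate under the expectation. Since time runs backward from $t$ to $0$, the process $(X_\tau^u)_{\tau\le t}$ is Markov in decreasing time. Conditioning on $X_s^u$ and using the tower property gives the Bellman identity
\begin{equation*}
  V_t^u(x)
  =
  \E\!\left[
    V_s^u(X_s^u)
    -
    \tfrac12\int_s^t \|u_\tau(X_\tau^u)\|^2\,\mathd\tau
    \;\middle|\; X_t^u = x
  \right].
\end{equation*}
Here the running cost over $[0,s]$, together with $r(X_0^u)$, is absorbed into $V_s^u(X_s^u)$ by the Markov property. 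This requires that $V_s^u$ depends only on the state at time $s$, which follows because the controlled SDE \eqref{eq:controlled_process} with Markov control $u$ has the Markov property.

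I then take $\nabla_x$ of the two terms separately, using a different representation of the conditional expectation for each.

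For the first term, I write $\E[V_s^u(X_s^u)\mid X_t^u=x] = \int V_s^u(y)\,p_{s|t}^u(y\mid x)\,\mathd y$. Differentiating under the integral and using $\nabla_x p = p\,\nabla_x \log p$ yields the REINFORCE term $\E[V_s^u(X_s^u)\nabla_x\log p^u_{s|t}(X_s^u\mid x)\mid X_t^u=x]$. This is the same log-derivative step used for \eqref{eq:exact_adjoint_decomposition}, which is the special case $s=0$ with $V_0^u=r$.

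For the second term, I use the pathwise (flow) representation instead. Realize $X^u_\tau$ for $\tau\in[s,t]$ as the stochastic flow $\Phi_{\tau,t}(x,B)$ of \eqref{eq:controlled_process} started at $X_t^u=x$. The running-cost integral is then a functional of $x$ and of Brownian increments that are independent of $x$. Differentiating inside the expectation gives $\E[\nabla_x\int_s^t\|u_\tau(X^u_\tau)\|^2\mathd\tau\mid X_t^u=x]$, where $\nabla_x$ acts through the flow. This is exactly the notation of the "pathwise suffix" term, and matches its meaning in \eqref{eq:exact_adjoint_decomposition}.

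One might worry about mixing the two representations, but the Bellman identity is a sum of two separate expectations, so each can be differentiated by its own method. Combining the two gives \eqref{eq:generalized_adjoint}.

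The main obstacle is justifying the interchange of $\nabla_x$ with the expectations. For the REINFORCE part this needs smoothness and integrability of $p_{s|t}^u$ and of its score; the case $s\to t$ is singular, but since $s<t$ is fixed, standard Gaussian-type bounds on the transition density apply. For the pathwise part it needs differentiability of the flow, which requires $u$ and $b^{\mathrm{ref}}$ to be $C^1$ with growth bounds. I would invoke the regularity assumptions of \Cref{app:verification_fixed_points} ("admissible control") and cite standard results on differentiability of stochastic flows and of transition densities for uniformly elliptic diffusions on $[s,t]\subset(0,1]$. The apparent degeneracy of $\sigma_t$ near $t=1$ matters only if $t=1$, and there the same argument works wherever the densities are well defined.
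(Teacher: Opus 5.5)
Your proposal is correct and follows the paper's proof essentially step for step. Both use the value recursion from the Markov property, apply the score-function identity to the prefix term $\E[V_s^u(X_s^u)\mid X_t^u=x]$, differentiate pathwise through the rollout for the suffix cost, and take the interchange of $\nabla_x$ and expectation as a regularity assumption.

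One small correction to your regularity remark: the case $s=0$, which is the one RAM actually uses, lies outside the uniformly elliptic window $[s,t]\subset(0,1]$ you cite. There $\sigma_0=0$, and separately $\sigma_\tau\to\infty$ as $\tau\uparrow 1$, so for $s=0$ you rely on the same blanket assumption the paper makes rather than on off-the-shelf elliptic density bounds.
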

The proof differentiates the recursion in $x$ and applies the log-derivative identity to the prefix term (\Cref{app:proof_generalized_ram}).

A single-trajectory estimator of the right-hand side combines three pieces: a running prefix value (the reward minus the integrated path cost over $[0, s]$), a bridge score, and the suffix path-cost gradient. For the bridge score we can use the Bayes estimator~\eqref{eq:general_bayes_bridge_score}, which generalizes in closed form to all $s \in [0, t)$ (\Cref{app:bayes_bridge_score}). Alternatively, the Malliavin score of~\citet{pidstrigach2025conditioning} is exact even away from the optimum, at the cost of additional VJPs through the score.

Two values of $s$ are computationally practical. \emph{Full-horizon} ($s = 0$) computes targets for every $t$ in a single backward sweep. \emph{Local} ($s = t - \delta$) avoids the sweep, collapsing the suffix to a single discretization step. Intermediate values would require a separate sweep per target time.

\subsection{Estimator variance on a 2D example}
\label{sec:variance_experiments}

\begin{figure}[h]
  \centering
  \includegraphics[width=0.8\linewidth]{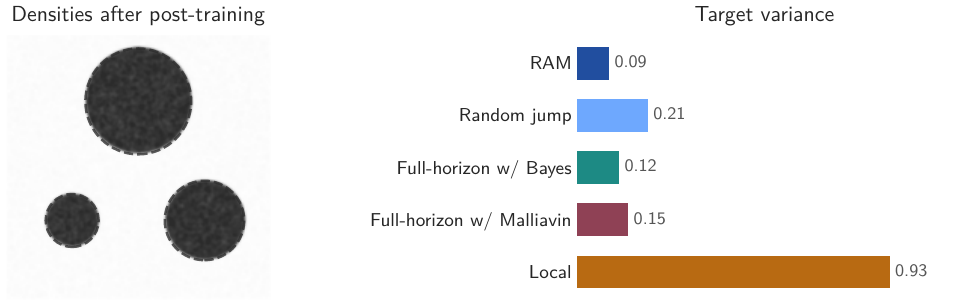}
  \caption{
    \textbf{Path-cost-corrected estimators on a 2D toy.}
    Left: post-trained densities match the tilted target for every estimator. Right: regression loss at convergence, equal to the variance of the control-space target.
  }
  \label{fig:toy_experiments}
\end{figure}

We compare the estimators on a 2D rectified flow with uniform endpoint on $[-1, 1]^2$ under a reward that is positive inside three circles of varying radius. All recover the correct tilted distribution, confirming the shared fixed point. At convergence the regression loss collapses to the variance of the control-space target $\sigma_t(\widehat A_{s,t}^u - A_t^u)$, so the right panel reads off estimator variance directly. Among the path-cost-corrected estimators, full-horizon has substantially lower variance than local or random-jump, and the Bayes bridge score beats Malliavin within full-horizon. RAM attains the lowest variance overall: dropping the path-cost gradient eliminates a dominant source of noise. These trade-offs amplify at image scale: variance grows with dimension, and adjoint sweeps over long horizons become prohibitive. RAM is the only estimator that scales to text-to-image post-training.

\section{Text-to-Image Experiments}
\label{sec:experiments}

We post-train Stable Diffusion 3.5 Medium (SD3.5M)~\citep{esser2024scaling}, a 2.5B-parameter rectified-flow transformer, on three text-to-image reward objectives, and compare RAM against the strongest recent baselines: Flow-GRPO~\citep{liu2025flowgrpo}, DiffusionNFT~\citep{zheng2025diffusionnft}, and AWM~\citep{xue2025awm}.
We largely follow the evaluation setup of~\citet{liu2025flowgrpo}: for each reward we train a separate model on its benchmark prompts, report the training reward on held-out prompts from the same benchmark, and check for reward hacking on an independent prompt set.
Flow-GRPO numbers come from the original paper, except for HPSv2, which we recompute from the authors' released checkpoints.
We retrain DiffusionNFT and AWM.
We train each model until the reward plateaus or starts to decrease.
Qualitative samples from the pretrained model and from the models post-trained with RAM are shown in \Cref{fig:title_figure}.

\paragraph{Training rewards.}
Each benchmark comes with its own training and test prompt sets.
During training we sample images for the training prompts and score each prompt-image pair under the reward function associated with that benchmark.
GenEval~\citep{ghosh2023geneval} measures compositional correctness: for prompts such as \emph{``a truck to the left of a refrigerator''}, pretrained vision models verify whether the required objects, attributes, and spatial relations all appear in the generated image.
OCR evaluates visual text rendering via an edit-distance reward that checks whether text specified in the prompt appears legibly in the image, following~\citet{liu2025flowgrpo}.
PickScore~\citep{kirstain2023pickapic} is a learned human-preference model trained on large-scale pairwise image comparisons.

\paragraph{Image-quality evaluation.}
Optimizing a single reward can degrade generic image quality, a failure mode known as reward hacking.
To detect this, we score each post-trained model on DrawBench~\citep{saharia2022photorealistic} prompts disjoint from training and test under five off-the-shelf metrics: Aesthetic~\citep{schuhmann2022laion} and DeQA~\citep{you2025deqa} rate perceptual quality; ImageReward~\citep{xu2023imagereward}, HPSv2~\citep{wu2023human}, and PickScore~\citep{kirstain2023pickapic} are learned human-preference models.

\begin{figure}[t]
  \centering
  \begin{subfigure}[t]{0.32\textwidth}
    \centering
    {\small\sffamily\bfseries Compositional Image Generation\par}
    \vspace{0.2em}
    \includegraphics[width=\linewidth]{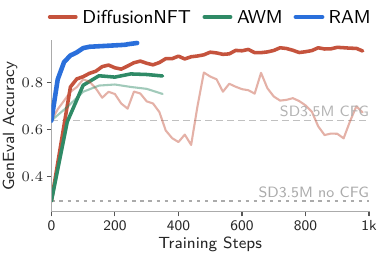}
  \end{subfigure}
  \hfill
  \begin{subfigure}[t]{0.32\textwidth}
    \centering
    {\small\sffamily\bfseries Visual Text Rendering\par}
    \vspace{0.2em}
    \includegraphics[width=\linewidth]{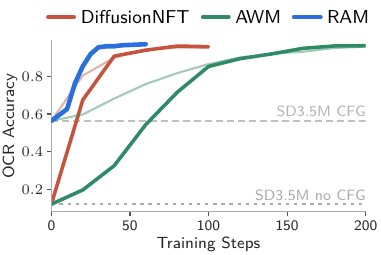}
  \end{subfigure}
  \hfill
  \begin{subfigure}[t]{0.32\textwidth}
    \centering
    {\small\sffamily\bfseries Human Preference Alignment\par}
    \vspace{0.2em}
    \includegraphics[width=\linewidth]{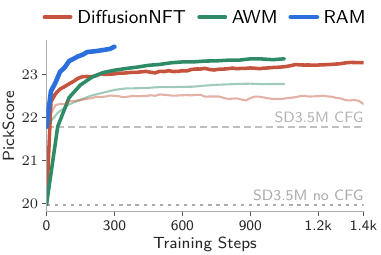}
  \end{subfigure}
  \caption{
    Training-reward curves comparing RAM with DiffusionNFT and AWM, two methods designed for training efficiency.
    The lighter companion curves show DiffusionNFT and AWM re-evaluated with CFG, which lowers their task reward.
    RAM, by contrast, remains compatible with CFG after post-training and is plotted with CFG.
  }
  \label{fig:training_speed_efficiency_methods}
\end{figure}

\paragraph{Reward normalization.}
Following common practice, we normalize raw rewards using group-relative advantage estimation.
For each training prompt we sample a group of $G = 24$ images, evaluate their rewards, and subtract the group mean.
We divide by the standard deviation pooled over all samples in the current training step, rather than per group. This stabilizes the scale across training and avoids the instability of per-group normalization when a group happens to draw nearly identical rewards.

\begin{table}[t]
  \centering
  \caption{
    Stable Diffusion 3.5 Medium (SD3.5M) post-training results.
    We report training rewards on held-out test prompts and image-quality metrics on DrawBench prompts.
    Higher is better for all metrics.
    Within each task, the best results are shown in \textbf{bold} and second-best results are \underline{underlined}.
    $\dagger$ denotes results obtained by retraining the corresponding baseline.
  }
  \label{tab:sd35_results}
  \small
  \setlength{\tabcolsep}{4pt}
  \renewcommand{\arraystretch}{1.08}
  \begin{tabular}{@{}lccccccccc@{}}
    \toprule
    \textbf{Model} & \textbf{\# Steps} & \multicolumn{3}{c}{\textbf{Training Reward}} & \multicolumn{5}{c}{\textbf{Image Quality Metrics}} \\
    \cmidrule(lr){3-5}
    \cmidrule(lr){6-10}
    & & \textbf{GenEval} & \textbf{OCR} & \textbf{PickScore} & \textbf{Aesthetic} & \textbf{DeQA} & \textbf{ImgRwd} & \textbf{HPSv2} & \textbf{PickScore} \\
    \midrule
    SD3.5M & & 0.64 & 0.56 & 21.79 & 5.41 & 4.08 & 0.82 & 0.28 & 22.40 \\
    \midrule
    \multicolumn{10}{c}{\textit{Compositional Image Generation}} \\
    \midrule
    Flow-GRPO & $>{}5$k & \underline{0.95} & & & \underline{5.25} & 4.01 & \underline{1.03} & \underline{0.27} & \underline{22.37} \\
    AWM$^\dagger$ & 300 & 0.83 & & & 5.14 & 3.75 & 0.67 & 0.24 & 22.04 \\
    DiffusionNFT$^\dagger$ & 900 & \underline{0.95} & & & 4.98 & \textbf{4.10} & 0.30 & 0.24 & 21.59 \\
    RAM & 270 & \textbf{0.97} & & & \textbf{5.38} & \underline{4.09} & \textbf{1.19} & \textbf{0.29} & \textbf{22.52} \\
    \midrule
    \multicolumn{10}{c}{\textit{Visual Text Rendering}} \\
    \midrule
    Flow-GRPO & $1.2$k & & 0.92 & & \textbf{5.32} & \textbf{4.06} & \textbf{0.95} & \textbf{0.28} & \textbf{22.44} \\
    AWM$^\dagger$ & 200 & & \textbf{0.97} & & 5.01 & 2.83 & -0.85 & 0.18 & 20.56 \\
    DiffusionNFT$^\dagger$ & 100 & & \underline{0.96} & & 4.87 & 3.01 & -0.97 & 0.18 & 20.26 \\
    RAM & 60 & & \textbf{0.97} & & \underline{5.23} & \underline{3.90} & \underline{0.44} & \underline{0.26} & \underline{21.83} \\
    \midrule
    \multicolumn{10}{c}{\textit{Human Preference Alignment}} \\
    \midrule
    Flow-GRPO & $>{}3$k & & & 23.31 & 5.92 & \textbf{4.22} & \underline{1.28} & \textbf{0.32} & 23.53 \\
    AWM$^\dagger$ & $1$k & & & \underline{23.39} & \textbf{6.31} & 4.10 & 1.27 & \underline{0.31} & \underline{23.76} \\
    DiffusionNFT$^\dagger$ & $1.4$k & & & 23.29 & \underline{6.16} & 4.13 & 1.23 & \underline{0.31} & 23.65 \\
    RAM & 300 & & & \textbf{23.67} & 6.11 & \underline{4.17} & \textbf{1.36} & \textbf{0.32} & \textbf{23.95} \\
    \bottomrule
  \end{tabular}
\end{table}

\paragraph{Results.}
RAM achieves the highest training reward on all three tasks (\Cref{tab:sd35_results}).
For compositional generation and preference alignment, this improvement comes with comparable or better image-quality metrics.
For visual text rendering, RAM avoids the severe quality collapse of DiffusionNFT and AWM while matching or exceeding their OCR reward.
Flow-GRPO scores higher than RAM on image-quality metrics for this task, but at a substantially lower OCR reward.
The OCR reward signal pushes models to sacrifice aesthetic quality for legibility. Stopping RAM earlier or using a smaller reward coefficient yields higher image-quality numbers at a lower OCR reward.
Qualitative samples in \Cref{fig:title_figure} show that RAM corrects compositional errors (\emph{GenEval}), renders specified text legibly (\emph{OCR}), and produces more preferred images (\emph{PickScore}) while preserving the style and fidelity of the pretrained model.
Notably, RAM remains compatible with classifier-free guidance (CFG) after post-training, whereas CFG lowers the task reward of DiffusionNFT and AWM (lighter curves in \Cref{fig:training_speed_efficiency_methods}).
We therefore evaluate RAM with CFG and the two baselines without.

\paragraph{Training efficiency.}
\Cref{fig:title_figure} (top) plots training reward against training steps for RAM and Flow-GRPO on all three tasks.
RAM matches Flow-GRPO's peak reward in roughly $50\times$ fewer steps on GenEval ($0.95$), $48\times$ fewer on OCR ($0.92$), and $34\times$ fewer on PickScore ($23.31$).
On GenEval, RAM reaches a reward of $0.90$ in $75\times$ fewer steps than Flow-GRPO.
Per-step compute cost is comparable between the two (e.g.\ $0.66$ GPU-hours for RAM vs.\ $0.70$ for Flow-GRPO on GenEval), so step ratios translate directly to wall-clock training time (\Cref{sec:experimental_details}).
All methods use the same number of prompts and samples per step, so differences in step efficiency reflect how each sample is used for learning.
Flow-GRPO performs one SDE rollout per sample and forms its loss from the strongly correlated noisy states along that trajectory.
RAM instead draws a clean endpoint and noises it $K = 8$ times with independent noise vectors. The resulting training states are conditionally independent given the endpoint, providing more gradient signal per step.
\Cref{fig:training_speed_efficiency_methods} extends the comparison to DiffusionNFT and AWM, two methods also designed for training efficiency.
These baselines amortize endpoints across multiple noise draws as well, so RAM's edge over them comes from its regression target: a closed-form adjoint-matching condition, whereas DiffusionNFT relies on a positive/negative-guidance surrogate and AWM on a flow-matching ELBO.
The AWM numbers we report are lower than in the original publication, which uses $72$ prompts per training step.
We use $48$ uniformly across all methods for a fair comparison.

\section{Related Work}
\label{sec:related_work}

\paragraph{Diffusion and flow-matching pretraining.}
Diffusion models~\citep{sohl2015deep,ho2020ddpm} and the score-based SDE formulation~\citep{song2020scorebased} established regression against analytically noised data as the standard pretraining recipe.
Stochastic interpolants and flow matching unify deterministic flows and stochastic diffusions under the same velocity regression template~\citep{albergo2023stochastic,lipman2022flowmatching}, and rectified flow~\citep{liu2022rectifiedflow,esser2024scaling} has become the default parametrization for large-scale image and video models.
RAM reuses this regression structure, now with a target built from the current model and a scalar reward.

\paragraph{Policy-gradient RL post-training.}
Trajectory-level policy gradients were first applied to diffusion RL post-training by~\citet{black2023training}, with KL-regularized variants in DPOK~\citep{fan2023dpok}.
Subsequent work scales this view to large text-to-image models and to flow matching, including Flow-GRPO~\citep{liu2025flowgrpo} and continuous-time score-as-control formulations~\citep{zhao2025scoreasaction}, with further refinements through dense credit assignment or exploration~\citep{deng2024prdp,chae2025diffexp,yang2024densereward,liu2024efficient}.
Preference-supervised variants replace scalar rewards with pairwise comparisons~\citep{wallace2024diffusiondpo,yang2024d3po,kim2024preferencealignmentflowmatching}.
All of these treat denoising as a generic Markov decision process and learn from stochastic rollouts, with noise schedules tuned empirically or hybridized with ODE solvers~\citep{deng2026densegrpo}. None exploit the analytic noising structure that makes pretraining cheap.
RAM stays in the black-box reward setting, but replaces the policy-gradient rollout with a pretraining-like regression against a closed-form target.

\paragraph{Stochastic optimal control and adjoint methods.}
KL-regularized reward maximization in continuous time is naturally an entropy-regularized stochastic optimal control problem~\citep{todorov2006linearly}.
ELEGANT~\citep{uehara2024fine} develops this perspective for diffusion post-training, with an emphasis on avoiding reward collapse. Adjoint Matching~\citep{domingo2024adjoint} introduces the memoryless noise schedule that turns the Bellman fixed point into an on-policy regression of the adjoint.
Adjoint Sampling~\citep{havens2025adjointsamplinghighlyscalable} extends this perspective beyond RL post-training to sampling from unnormalized densities. Like RAM, it avoids SDE rollouts during training by sampling an endpoint and noising it analytically.
These methods target the KL-regularized objective directly, but their estimators require reward gradients and an adjoint ODE integrated backward along a stored SDE trajectory.
RAM sits most directly in the Adjoint Matching line and sharpens its path-space picture: the optimum tilts only the clean-endpoint distribution while preserving the conditional noising law given that endpoint. This justifies endpoint sampling with analytic noising, and yields a closed-form Bayes bridge target in place of a pathwise adjoint sweep.
This is what makes RAM substantially more efficient than Adjoint Matching: training avoids both the SDE rollout and the backward adjoint sweep.
The closely related Tilt Matching~\citep{potaptchik2025tilt} arrives at the same equation for how the velocity field changes as the reward tilt increases, and is likewise free of reward gradients and trajectory backpropagation. It reaches the tilted target by annealing through intermediate models. Our adjoint-matching formulation instead targets the fixed point directly.

\paragraph{Pretraining-like regression objectives.}
A recent line aligns RL post-training with the supervised structure of pretraining.
DiffusionNFT~\citep{zheng2025diffusionnft} defines an improvement direction by contrasting positive and negative endpoint distributions, then distills the resulting guidance into a single model with tunable strength.
AWM~\citep{xue2025awm} starts from a GRPO objective over clean endpoints, replaces the intractable sequence likelihood by a score/flow-matching ELBO, and adds a velocity-space KL penalty.
\citet{choi2026rethinking} further argue empirically that the likelihood estimator matters more than the particular outer loss.
RAM instead starts from the KL-regularized control problem, giving the update a direct control interpretation rather than a guidance-distillation or likelihood-surrogate interpretation. The principled derivation also yields a simpler objective in practice.

\paragraph{Differentiable-reward methods.}
When rewards are differentiable, end-to-end backpropagation through the sampler provides a direct training signal.
ImageReward introduces a learned human-preference scorer and the ReFL algorithm~\citep{xu2023imagereward}. AlignProp~\citep{prabhudesai2023alignprop}, DRaFT~\citep{clark2023directly}, and DRTune~\citep{wu2024drtune} explore full- or partial-chain reward backpropagation and the resulting memory/stability trade-offs.
These methods rely on reward gradients backpropagated through the sampling chain or a truncated segment of it. RAM instead targets black-box rewards and keeps the KL-regularized control derivation.

\paragraph{Bridge estimators for controlled processes.}
Recent work estimates bridge scores under general controlled processes using Malliavin calculus or control self-consistency~\citep{pidstrigach2025conditioning,howard2025controlconsistencylossesdiffusion}, building on classical Malliavin representations of scores and sensitivities~\citep{lehec2013representation,baudoin2002conditioned}.
The path-cost-corrected estimators we introduce in~\Cref{sec:path_cost} connect to this literature, while the main RAM objective specializes to a cheaper Bayes bridge approximation tailored to large-scale RL post-training.

\paragraph{Flow-map reward alignment.}
Recently, reward alignment has been applied to flow maps~\citep{boffi2024flowmap}, which enable few-step generation. Meta Flow Maps~\citep{potaptchik2026meta} and Diamond Maps~\citep{holderrieth2026diamond} align such models to a reward, but require reward gradients. We believe that extending RAM to flow maps, while keeping its gradient-free regression structure, is a promising direction for future work.

\section*{Acknowledgments}

We acknowledge support from the Alexander von Humboldt Foundation. We thank the Leibniz Supercomputing Centre (LRZ) for providing computational resources, and Lennart Redl for feedback on the writing.

\bibliographystyle{plainnat}
\bibliography{references}
\clearpage

\appendix
\section{Optimal-Control Foundations}
\label{app:proofs}

\subsection{Optimal control as path-space KL regularization}
\label{app:ocp_as_kl}

Let $\mathbb P$ denote the path law of the reference backward process~\eqref{eq:backward_process} and $\mathbb P^u$ the path law of the controlled process~\eqref{eq:controlled_process}.
We assume Novikov's condition, a moment bound on~$u$ ensuring that the change of measure below is well defined, and $\E_{\mathbb P}[\exp(r(X_0))] < \infty$.
Girsanov's theorem relates the path laws of two diffusions that share the same noise coefficient but differ in drift; applied here it gives
\begin{equation}
  \KL(\mathbb P^u  \|  \mathbb P)
  =
  \frac{1}{2} \E_{\mathbb P^u}\!\left[
    \int_0^1 \|u_t(X_t^u)\|^2\,\mathd t
  \right].
  \label{eq:path_kl_equals_energy}
\end{equation}
The stochastic optimal control problem~\eqref{eq:SOC} is therefore equivalent to the Gibbs variational problem
\begin{equation}
  \sup_{\mathbb Q \ll \mathbb P}
  \bigl\{
    \E_{\mathbb Q}[r(X_0)]
    -
    \KL(\mathbb Q  \|  \mathbb P)
  \bigr\},
  \label{eq:gibbs_variational_problem}
\end{equation}
where $\mathbb Q \ll \mathbb P$ denotes absolute continuity, so every trajectory possible under~$\mathbb Q$ must also have positive probability under~$\mathbb P$.
The closed-form solution is the tilted path law with density ratio
\begin{equation}
  \frac{\mathd \mathbb Q^\ast}{\mathd \mathbb P}
  =
  \frac{\exp(r(X_0))}{
    \E_{\mathbb P}\!\left[\exp(r(X_0))\right]}.
  \label{eq:optimal_path_measure}
\end{equation}

\subsection{Path-law factorization}
\label{app:path_law_factorization}

\begin{lemma}[Path-law factorization]
  \label{lem:path_factorization}
  Let $\Pi(\cdot \mid x_0)$ denote the conditional law of the reference backward trajectory $(X_t)_{t \in [0,1]}$ given $X_0 = x_0$.
  Then
  \begin{equation}
    \mathbb P
    =
    \int \Pi(\cdot \mid x_0) p_0^{\mathrm{ref}}(x_0)\,\mathd x_0,
    \label{eq:path_factorization}
  \end{equation}
  and $\Pi(\cdot \mid x_0)$ does not depend on the data distribution.
\end{lemma}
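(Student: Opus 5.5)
The plan is to obtain both claims by applying time reversal to the forward SDE~\eqref{eq:forward_process}. The process~\eqref{eq:backward_process}, started from $\mathcal N(0,I)$ with the reference score, is the time reversal of the forward process. So its path law $\mathbb P$ equals the law of $(X_t)_{t\in[0,1]}$ solving~\eqref{eq:forward_process} with $X_0\sim p_0^{\mathrm{ref}}$. Strictly, this holds only approximately, since $X_1\sim\mathcal N(0,I)$ exactly only in the limit $t\to1$ of the kernel~\eqref{eq:noising_kernel}. I would note that at $t=1$ the kernel is $\mathcal N(0,I)$ regardless of $x_0$, so the marginal at time $1$ is exactly the prior. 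The reversal statement is the Anderson/Haussmann–Pardoux theorem, which I would assume under the standing regularity conditions.

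Given this identification, the factorization~\eqref{eq:path_factorization} is the disintegration of $\mathbb P$ with respect to the time-$0$ coordinate. Path space $C([0,1],\R^d)$ is Polish, so a regular conditional law $\Pi(\cdot\mid x_0)$ exists. Its $X_0$-marginal is $p_0^{\mathrm{ref}}$, which gives~\eqref{eq:path_factorization}.

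For independence from the data distribution, I would identify $\Pi(\cdot\mid x_0)$ explicitly. It is the law of the forward SDE~\eqref{eq:forward_process} started deterministically at $x_0$. The SDE is linear with coefficients $\kappa_t,\sigma_t$ that do not involve $p_0$, so it has a pathwise unique strong solution $X_t=\Phi(x_0,B)$, a measurable functional of the initial point and the Brownian path. The Brownian motion is independent of $X_0$. Hence, for any bounded measurable functional $F$ of the path,
\[
\E[F(X)\mid X_0=x_0]=\E[F(\Phi(x_0,B))],
\]
and the right-hand side depends only on $x_0$ and the schedule. A version of $\Pi$ is therefore $\Law(\Phi(x_0,B))$, the same for every choice of $p_0$. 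Its finite-dimensional marginals are Gaussian, with time-$t$ marginal~\eqref{eq:noising_kernel}, which serves as a consistency check.

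The main obstacle is the endpoint singularity. Since $\kappa_t=-1/(1-t)$ and $\sigma_t^2=2t/(1-t)$ blow up as $t\to1$, the strong solution exists only on $[0,1)$. I would handle this by solving explicitly on $[0,1)$ and checking that $X_t=(1-t)x_0+t\,\epsilon_t$-type representations converge almost surely to a standard Gaussian as $t\to1$. One way is a time-changed Brownian motion, $X_t=(1-t)\bigl(x_0+\int_0^t \sigma_s/(1-s)\,\mathd B_s\bigr)$, whose variance is $(1-t)^2\cdot t^2/(1-t)^2=t^2$; the almost-sure limit then comes from a martingale/Gaussian process argument. This extends $\Phi$ continuously to $[0,1]$. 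Similarly, the reversal theorem would be applied on $[0,1-\delta]$ and extended by letting $\delta\to0$. With this, the remaining steps are routine.
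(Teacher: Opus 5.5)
Your main line is the paper's own argument: Anderson time reversal identifies $\mathbb P$ with the forward path law, and the forward conditional kernel given $X_0$ does not depend on $p_0$. Your representation $X=\Phi(x_0,B)$, with $B$ independent of $X_0$, justifies data-independence more sharply than the paper's \emph{the coefficients are deterministic}. By the freezing lemma it also gives you a version of $\Pi(\cdot\mid x_0)$ directly.

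The step that fails is your resolution of the endpoint. The paths do \emph{not} converge as $t\uparrow 1$. So $\Phi$ has no continuous extension to $[0,1]$, and the process does not live in $C([0,1],\R^d)$. From your own formula, $\epsilon_t\coloneqq(X_t-(1-t)x_0)/t=\frac{1-t}{t}M_t$, where $M_t=\int_0^t\sigma_s/(1-s)\,\mathd B_s$ is a Gaussian martingale with deterministic bracket $T=t^2/(1-t)^2$ per coordinate. Hence $M_t=W_T$ for a Brownian motion $W$, and since $\frac{1-t}{t}=T^{-1/2}$,
\[
  \epsilon_t=\frac{W_T}{\sqrt T},\qquad T=\frac{t^2}{(1-t)^2}\to\infty .
\]
By the law of the iterated logarithm, $\limsup_{T\to\infty}W_T/\sqrt T=+\infty$ and the $\liminf$ is $-\infty$ almost surely. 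Equivalently, $\epsilon_t$ is a stationary Ornstein--Uhlenbeck process in the clock $u=\log\frac{t}{1-t}$. The marginals converge to $\mathcal N(0,I)$, but no martingale or Gaussian-process argument can give an almost-sure limit.

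The same singularity appears on the backward side. The drift $b_t^{\mathrm{ref}}(x)=x/(1-t)+2v_t(x)$ contracts non-integrably when integrated down from $t=1$. So the backward solution on $[0,1)$ does not depend on its initial value $X_1$.

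The repair leaves the rest of your proof intact. Work on $C([0,1),\R^d)$ with the topology of locally uniform convergence. This space is Polish, and a law on it is determined by its restrictions to the intervals $[0,1-\delta]$. On each such interval the strong solution exists and the reversal theorem applies. Hence $\mathbb P$ restricted to $[0,1)$ is the forward law, and there $\Pi(\cdot\mid x_0)=\Law(\Phi(x_0,B))$, which is data-independent.

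If you want the time-$1$ value in the trajectory, adjoin it as a separate coordinate. By the forgetting just noted, it is an independent $\mathcal N(0,I)$ variable for every $x_0$. It therefore affects neither the factorization nor the data-independence.

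The paper's proof does not address this endpoint at all, so you were right to flag it. But the continuity claim has to be replaced by this restriction argument.
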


\begin{proof}
Let $\Pi^{\mathrm{fwd}}(\cdot \mid x_0)$ be the conditional law of the forward process~\eqref{eq:forward_process} given $X_0 = x_0$.
Because the coefficients of~\eqref{eq:forward_process} are deterministic, $\Pi^{\mathrm{fwd}}(\cdot \mid x_0)$ does not depend on the data distribution, and the forward path law factors as $\int \Pi^{\mathrm{fwd}}(\cdot \mid x_0) p_0^{\mathrm{ref}}(x_0)\,\mathd x_0$.
The backward process~\eqref{eq:backward_process} is the exact time reversal of the forward process~\citep{anderson1982reverse}, so reversing trajectories yields a kernel $\Pi(\cdot \mid x_0)$, obtained from $\Pi^{\mathrm{fwd}}$ by time reversal, for which~\eqref{eq:path_factorization} holds.
Time reversal preserves the data-independence.
\end{proof}

This is the path-space factorization underlying what \citet{domingo2024adjoint} call the memoryless property: the conditional trajectory law given~$X_0$ carries no information about which data distribution was used.
At $t{=}1$, the noising kernel~\eqref{eq:noising_kernel} reduces to $\mathcal N(0,\,I)$ regardless of $X_0$, so the source marginal is also data-independent.

\subsection{Optimal controlled process}
\label{app:optimal_controlled_process}

We prove \Cref{thm:optimal_controlled_process}.

\begin{proof}
We prove the two directions of~\eqref{eq:optimal_process} separately.

\paragraph{Optimality $\Rightarrow$ form.}
Let $\mathbb Q^\ast$ denote the optimal path law.
By \eqref{eq:optimal_path_measure}, $\mathbb Q^\ast$ tilts $\mathbb P$ by $\exp(r(X_0))/Z$ with $Z = \E_{\mathbb P}[\exp(r(X_0))]$.
Substituting the factorization of \Cref{lem:path_factorization},
\begin{equation*}
  \mathbb Q^\ast
  =
  \int \Pi(\cdot \mid x_0)
  \frac{\exp(r(x_0))}{Z} p_0^{\mathrm{ref}}(x_0)\,\mathd x_0
  =
  \int \Pi(\cdot \mid x_0) p_0^\ast(x_0)\,\mathd x_0,
\end{equation*}
with $p_0^\ast(x) \propto p_0^{\mathrm{ref}}(x)\exp(r(x))$.
Since the tilt depends only on~$x_0$, it changes only the endpoint marginal to~$p_0^\ast$ while leaving the conditional path law $\Pi(\cdot \mid x_0)$ unchanged, which is~\eqref{eq:optimal_process}.
Equivalently, the optimal process is the time reversal of the forward process started from~$p_0^\ast$, so applying the noising kernel gives the optimal time-$t$ marginal
\begin{equation*}
  p_t^\ast
  =
  \Law\!\bigl((1-t)\,X_0 + t \epsilon\bigr),
  \qquad
  X_0 \sim p_0^\ast,
  \quad
  \epsilon \sim \mathcal N(0,\,I).
\end{equation*}

\paragraph{Form $\Rightarrow$ optimality.}
Conversely, suppose~$X^u$ is a controlled process satisfying~\eqref{eq:optimal_process}.
Integrating the matching conditional law against~$p_0^\ast$ and applying \Cref{lem:path_factorization},
\begin{equation*}
  \mathbb P^u
  =
  \int \Pi(\cdot \mid x_0) p_0^\ast(x_0)\,\mathd x_0.
\end{equation*}
The reference factorization $\mathbb P = \int \Pi(\cdot \mid x_0) p_0^{\mathrm{ref}}(x_0)\,\mathd x_0$ shares the same conditional kernel, so the Radon--Nikodym derivative depends only on the endpoint,
\begin{equation*}
  \frac{\mathd \mathbb P^u}{\mathd \mathbb P}(\omega)
  =
  \frac{p_0^\ast(X_0(\omega))}
       {p_0^{\mathrm{ref}}(X_0(\omega))}
  =
  \frac{\exp(r(X_0(\omega)))}{Z},
\end{equation*}
which is exactly the Gibbs-optimal density ratio~\eqref{eq:optimal_path_measure}, so $\mathbb P^u = \mathbb Q^\ast$.
The Gibbs problem~\eqref{eq:gibbs_variational_problem} has a strictly concave objective and hence a unique maximizer, so~$u$ solves~\eqref{eq:SOC}.
\end{proof}

\subsection{Optimal velocity and score identities}
\label{app:proof_optimal_score_corollary}

We verify the velocity and score identities stated after \Cref{thm:optimal_controlled_process}, and derive the control--velocity relation~\eqref{eq:control_to_velocity}.

\begin{proof}
By \Cref{thm:optimal_controlled_process}, the optimal time-$t$ marginal equals that of the forward process started from $p_0^\ast$, so
\begin{equation*}
  v_t^\ast(x)
  =
  \E\!\left[
    \epsilon - X_0
    \;\middle|\;
    X_t = x
  \right],
  \qquad
  X_0 \sim p_0^\ast,
\end{equation*}
which has the same functional form as~\eqref{eq:velocity_field} with $p_0^\ast$ in place of $p_0$.
The velocity--score relation~\eqref{eq:score_from_velocity} carries over with $p_t^\ast$ in place of~$p_t$ by the same derivation.

For the control--velocity relation, compare the drift decompositions of the reference and optimal processes,
\begin{align*}
  b_t^{\mathrm{ref}}(x)
  &=
  v_t^{\mathrm{ref}}(x)
  -
  \frac{\sigma_t^{2}}{2} \nabla_x\!\log p_t^{\mathrm{ref}}(x),
  \\
  b_t^{\mathrm{ref}}(x) + \sigma_t u_t^\ast(x)
  &=
  v_t^\ast(x)
  -
  \frac{\sigma_t^{2}}{2} \nabla_x\!\log p_t^\ast(x).
\end{align*}
Subtracting gives
\begin{equation*}
  \sigma_t u_t^\ast
  =
  \bigl(v_t^\ast - v_t^{\mathrm{ref}}\bigr)
  -
  \frac{\sigma_t^{2}}{2}
  \bigl(
    \nabla_x\!\log p_t^\ast
    -
    \nabla_x\!\log p_t^{\mathrm{ref}}
  \bigr).
\end{equation*}
Substituting $\nabla_x\!\log p_t = 2(\kappa_t x - v_t)/\sigma_t^{2}$ for both marginals, the $\kappa_t x$ terms cancel and the score difference is $-2(v_t^\ast - v_t^{\mathrm{ref}})/\sigma_t^{2}$, so $\sigma_t u_t^\ast = 2(v_t^\ast - v_t^{\mathrm{ref}})$.
\end{proof}

\subsection{Verification and self-consistency}
\label{app:verification_fixed_points}
\label{proof:consistency}

We prove that the self-consistency condition $u_t = -\sigma_t A_t^u$~\eqref{eq:value_function_matching} is both necessary and sufficient for optimality.
One can thereby certify a candidate control by checking a single PDE rather than comparing against all alternatives.

The result is standard HJB / adjoint-matching theory. We include the proof to keep the paper self-contained.

\begin{theorem}[Verification]
  \label{thm:verification}
  Assume the Hamilton--Jacobi--Bellman (HJB) equation associated with~\eqref{eq:SOC} admits a unique $C^{1,2}$ solution (once continuously differentiable in~$t$, twice in~$x$) and that the optimal control is a Markov feedback, depending on the current state~$x$ only, not on the trajectory history.
  Let~$u$ be an admissible control with finite expected cost whose value function~$V_t^u$ is $C^{1,2}$ in~$(t,x)$.
  Then
  \begin{equation}
    u \text{ is optimal}
    \qquad\Longleftrightarrow\qquad
    u_t(x) = -\sigma_t \nabla_x V_t^u(x)
    \quad \text{for all }(t,x),
    \label{eq:consistency_condition_app}
  \end{equation}
  and in that case $X_0^u \sim p_0^\ast$.
\end{theorem}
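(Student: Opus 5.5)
The plan is to prove both directions through the HJB equation, using the optimal value function $V^\ast$ and the policy-evaluation PDE satisfied by $V^u$. By the Feynman--Kac formula, applied under the $C^{1,2}$ assumption, $V^u$ solves the linear backward equation (in reverse time, since the process runs from $t=1$ to $t=0$):
\begin{equation*}
  -\partial_t V_t^u + \bigl(b_t^{\mathrm{ref}} + \sigma_t u_t\bigr)\cdot\nabla V_t^u + \tfrac{\sigma_t^2}{2}\Delta V_t^u - \tfrac12\|u_t\|^2 = 0,
  \qquad V_0^u = r .
\end{equation*}
The sign of the time derivative has to match the paper's convention that $X_0$ is the terminal time. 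I would also write the HJB equation for the optimal value
\begin{equation*}
  V_t^\ast(x) = \sup_{w}\,\E\bigl[r(X_0^w) - \tfrac12\textstyle\int_0^t\|w_\tau\|^2\,\mathd\tau \mid X_t^w = x\bigr].
\end{equation*}
It is obtained from the Feynman--Kac equation by replacing $\sigma_t u\cdot\nabla V - \tfrac12\|u\|^2$ with its supremum over $u$. Pointwise this supremum equals $\tfrac{\sigma_t^2}{2}\|\nabla V\|^2$ and is attained uniquely at $u = \sigma_t\nabla V$.

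The sign in \eqref{eq:value_function_matching} needs care here: the statement writes $u=-\sigma_t\nabla V$. The first step is therefore to fix the convention under which $\mathd t$ in the backward process is negative, so that the effective drift term enters the generator as $-(b+\sigma u)\cdot\nabla$. The maximizer then becomes $u=-\sigma_t\nabla V$, consistent with the statement. I will do this bookkeeping once and carry it through both directions.

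($\Leftarrow$) Suppose $u=-\sigma_t\nabla V^u$. Substituting into the Feynman--Kac equation, the $u$-dependent terms become exactly the pointwise supremum. Hence $V^u$ solves the HJB equation with the same terminal condition $r$, and uniqueness of the $C^{1,2}$ HJB solution gives $V^u=V^\ast$. A standard verification argument then shows that $u$ attains the optimum. Apply Itô's formula to $V^\ast(t,X^w_t)$ along any admissible $w$ and take expectations; this needs finite cost and a localization argument to kill the martingale term. The result is $\E[r(X_0^w)-\tfrac12\int\|w\|^2]\le V^\ast_1$ averaged over the initial law, with equality when $w=u$.

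($\Rightarrow$) If $u$ is optimal, then $V^u=V^\ast$, since the optimal control is Markov and the value of an optimal feedback coincides with the optimal value. Therefore $V^u$ solves both the Feynman--Kac equation and the HJB equation. Subtracting the two shows that $u_t(x)$ attains the pointwise supremum at every $(t,x)$. By strict concavity of $u\mapsto -\sigma u\cdot\nabla V-\tfrac12\|u\|^2$ the maximizer is unique, so $u=-\sigma_t\nabla V^u$. Strictly this holds almost everywhere along the support, and it extends everywhere by continuity and nondegeneracy of the noise.

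Finally, if $u$ is optimal then its path law is the Gibbs optimum $\mathbb Q^\ast$ of \eqref{eq:optimal_path_measure}, by the equivalence in \Cref{app:ocp_as_kl}. \Cref{thm:optimal_controlled_process} then gives $X_0^u\sim p_0^\ast$.

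The main obstacle is the verification step: justifying that the local martingale from Itô's formula has zero expectation, and passing from "optimal" to "$V^u=V^\ast$ pointwise". I would handle both with stopping-time localization plus the finite-cost and Novikov assumptions, invoking \citet{pham2009continuous} for the standard estimates rather than reproving them.
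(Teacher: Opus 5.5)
Your proposal is correct and follows the paper's route. You write the linear policy-evaluation PDE for $V^u$, substitute $u_t=-\sigma_t\nabla_x V_t^u$ to obtain the HJB equation and invoke uniqueness, and for the converse show that an optimal Markov control attains the pointwise supremum of $-\sigma_t a^\top\nabla_x V-\tfrac12\|a\|^2$ (unique maximizer $-\sigma_t\nabla_x V$); the endpoint law then comes from \Cref{thm:optimal_controlled_process}. Your It\^o/localization verification, the subtraction of the two PDEs, and the support/continuity remark make explicit steps the paper leaves implicit. One correction: the drift term should enter as $-(b_t^{\mathrm{ref}}+\sigma_t u_t)^\top\nabla_x V_t^u$ already in your first display, since that is the paper's generator $\mathcal B_t^u$ under the $\mathd t<0$ convention; with that sign there is no need to pass through $u=\sigma_t\nabla V$.
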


\begin{proof}
The proof rests on two PDEs the value function can satisfy: a \emph{linear} one that holds for any control, and a \emph{nonlinear} one (the HJB equation) that singles out the optimum.
The self-consistency condition is exactly what turns the first into the second.

\paragraph{Value-function PDE.}
For any admissible control~$u$, the value function~$V_t^u$ satisfies the linear PDE
\begin{equation}
  0
  =
  -\partial_t V_t^u(x)
  +
  \mathcal B_t^u V_t^u(x)
  -
  \frac{1}{2}\|u_t(x)\|^2,
  \qquad
  V_0^u(x) = r(x),
  \label{eq:linear_pde_app}
\end{equation}
where $\mathcal B_t^u$ is the backward transition generator of the controlled diffusion~\eqref{eq:controlled_process}, encoding how the reverse-time drift and noise act on a test function~$f$,
\begin{equation*}
  \mathcal B_t^u f(x)
  =
  -\bigl(b_t^{\mathrm{ref}}(x) + \sigma_t u_t(x)\bigr)^\top
  \nabla_x f(x)
  +
  \frac{\sigma_t^{2}}{2} \Delta_x f(x).
\end{equation*}
The first term captures transport by the drift, the second captures diffusion.
Equation~\eqref{eq:linear_pde_app} is linear in~$V_t^u$ because~$u$ is fixed; it describes how the expected future value evolves along the controlled process.

\paragraph{Self-consistency $\Rightarrow$ optimality.}
Suppose $u_t = -\sigma_t \nabla_x V_t^u$.
Substituting this into the generator, the controlled-drift term expands as $\sigma_t^{2} \|\nabla_x V_t^u\|^2$, while the running cost is $\tfrac{1}{2}\|u_t\|^2 = \tfrac{\sigma_t^{2}}{2}\|\nabla_x V_t^u\|^2$.
Substituting both into~\eqref{eq:linear_pde_app} cancels half of the quadratic terms and leaves
\begin{equation}
  0
  =
  -\partial_t V_t^u
  -
  (b_t^{\mathrm{ref}})^\top \nabla_x V_t^u
  +
  \frac{\sigma_t^{2}}{2} \Delta_x V_t^u
  +
  \frac{\sigma_t^{2}}{2} \|\nabla_x V_t^u\|^2,
  \label{eq:hjb_app}
\end{equation}
which is the HJB equation for~\eqref{eq:SOC}.
Since the HJB equation has a unique solution, $V^u$ must equal the optimal value function, so~$u$ is optimal.

\paragraph{Optimality $\Rightarrow$ self-consistency.}
Let $u^\ast$ be an optimal Markov control with value function $V = V^{u^\ast}$.
The HJB equation can be written as
\begin{equation*}
  0
  =
  -\partial_t V
  -
  (b_t^{\mathrm{ref}})^\top \nabla_x V
  +
  \frac{\sigma_t^{2}}{2} \Delta_x V
  +
  \sup_{a \in \R^d}
  \bigl\{
    -\sigma_t a^\top \nabla_x V
    -
    \tfrac{1}{2}\|a\|^2
  \bigr\},
  \qquad
  V_0 = r.
\end{equation*}
The supremum is a concave quadratic with unique maximizer $a^\ast = -\sigma_t \nabla_x V$.
Since~$u^\ast$ attains the supremum, $u_t^\ast(x) = -\sigma_t \nabla_x V_t^{u^\ast}(x)$.
The endpoint law follows from \Cref{thm:optimal_controlled_process}.
\end{proof}

\section{Derivations for RAM}
\label{app:ram_derivations}

This appendix collects the derivations behind \Cref{sec:ram}.

\subsection{Bayes bridge score derivation}
\label{app:bayes_bridge_score}

We derive~\eqref{eq:general_bayes_bridge_score} in the general form covering any $0 \le s < t \le 1$.
The forward SDE~\eqref{eq:forward_process} is linear, so its transition kernel from~$s$ to~$t$ is Gaussian:
\begin{equation}
  p_{t|s}(x_t \mid x_s)
  =
  \mathcal N\!\bigl(a_{t,s} x_s,\,\beta_{t,s}^{2} I\bigr),
  \qquad
  a_{t,s} = \frac{1-t}{1-s},
  \qquad
  \beta_{t,s}^{2} = t^{2} - \frac{(1-t)^{2} s^{2}}{(1-s)^{2}}.
  \label{eq:forward_kernel_app}
\end{equation}
Bayes' rule gives
\begin{equation*}
  \nabla_{x_t}\!\log p_{s|t}(x_s \mid x_t)
  =
  \nabla_{x_t}\!\log p_{t|s}(x_t \mid x_s)
  -
  \nabla_{x_t}\!\log p_t(x_t).
\end{equation*}
The Gaussian term is $\nabla_{x_t}\!\log p_{t|s}(x_t \mid x_s) = -(x_t - a_{t,s} x_s)/\beta_{t,s}^{2}$, and the marginal score follows from~\eqref{eq:score_from_velocity} as $\nabla_x\!\log p_t(x) = -(x + (1-t) v_t(x))/t$.
Combining the two,
\begin{equation}
  \nabla_{x_t}\!\log p_{s|t}(x_s \mid x_t)
  =
  -\frac{x_t - a_{t,s} x_s}{\beta_{t,s}^{2}}
  +
  \frac{x_t + (1-t) v_t(x_t)}{t}.
  \label{eq:generalized_bayes_bridge_score}
\end{equation}
This is the Bayes bridge score used by both the jump estimator~\eqref{eq:jump_estimator} and the pathwise family of~\Cref{sec:path_cost}.
Setting $s{=}0$ gives $a_{t,0} = 1-t$ and $\beta_{t,0} = t$. With $\epsilon \coloneqq (x_t - (1-t) x_0)/t$, equation~\eqref{eq:generalized_bayes_bridge_score} simplifies to the prefactor form $\frac{1-t}{t}\bigl(v_t(x_t) - (\epsilon - x_0)\bigr)$ used in~\eqref{eq:general_bayes_bridge_score}.

\subsection{Control-space form of RAM and time-dependent weighting}
\label{app:ram_control_space}

The Bayes bridge score~\eqref{eq:general_bayes_bridge_score} has prefactor $(1-t)/t$, which combines with $\sigma_t^{2} = 2t/(1-t)$ via the identity $\sigma_t (1-t)/t = 2/\sigma_t$.

Substituting the reward proxy~\eqref{eq:ram_reward_proxy_identity} and the bridge score~\eqref{eq:general_bayes_bridge_score} into the self-consistency condition $u_t = -\sigma_t A_t^u$ gives the control-space target
\begin{equation*}
  u_t
  =
  \mathrm{sg}\!\left(
    \frac{2}{\sigma_t}\, r(X_0) \bigl(
      (\epsilon - X_0)
      -
      v_t^\theta(X_t)
    \bigr)
  \right).
\end{equation*}
Regressing $u_t^\theta = (2/\sigma_t)(v_t^\theta - v_t^{\mathrm{ref}})$ against this target, the common $2/\sigma_t$ factor pulls out as $4/\sigma_t^{2}$ on the velocity-space squared residual:
\begin{equation}
  \mathcal L_u(\theta)
  =
  \E\!\left[
    \frac{4}{\sigma_t^{2}}
    \left\|
      v_t^\theta(X_t)
      -
      \mathrm{sg}\!\left(
        v_t^{\mathrm{ref}}(X_t)
        +
        r(X_0)
        \bigl(
          (\epsilon - X_0)
          -
          v_t^\theta(X_t)
        \bigr)
      \right)
    \right\|^2
  \right].
  \label{eq:ram_weighted_velocity_loss}
\end{equation}
The prefactor $4/\sigma_t^{2} = 2(1-t)/t$ downweights timesteps near the source.
Dropping it yields the uniformly weighted velocity loss used in the main text.

\subsection{Path-integral characterization of the optimum}
\label{app:proof_tilt_path}

We prove \Cref{lem:tilt_path}. Throughout, $\epsilon \sim \mathcal{N}(0,\,I)$ is independent of $X_0$ and $X_t = (1-t) X_0 + t \epsilon$. The distribution of $X_0$ is stated alongside each expectation.

\paragraph{The ODE.}
We show that
\begin{equation*}
  \partial_\lambda v_t^\lambda(x)
  =
  \E\!\left[ r(X_0)\bigl((\epsilon - X_0) - v_t^\lambda(x)\bigr) \;\middle|\; X_t = x \right],
  \qquad X_0 \sim p_0^\lambda,
\end{equation*}
which is~\eqref{eq:tilt_velocity_ode}. Starting from $v_t^\lambda(x) = \E[\epsilon - X_0 \mid X_t = x]$ with $X_0 \sim p_0^\lambda$, we differentiate in $\lambda$. The score in $\lambda$ of the conditional density is
\begin{equation*}
  \partial_\lambda \log p_0^\lambda(x_0 \mid X_t = x)
  =
  r(x_0) - \E\!\left[r(X_0) \;\middle|\; X_t = x\right]_{X_0 \sim p_0^\lambda},
\end{equation*}
so for $\lambda$-independent $f$,
\begin{equation*}
  \partial_\lambda
  \E\!\left[ f \;\middle|\; X_t = x \right]_{X_0 \sim p_0^\lambda}
  =
  \E\!\left[ r(X_0) f \;\middle|\; X_t = x \right]_{X_0 \sim p_0^\lambda}
  -
  \E\!\left[ f \;\middle|\; X_t = x \right]_{X_0 \sim p_0^\lambda}
  \E\!\left[ r(X_0) \;\middle|\; X_t = x \right]_{X_0 \sim p_0^\lambda}.
\end{equation*}
Applying with $f = \epsilon - X_0$ and using $\E[\epsilon - X_0 \mid X_t = x] = v_t^\lambda(x)$ under $p_0^\lambda$ yields the claim.

\paragraph{The integral form.}
Integrating the ODE from $\lambda = 0$ to $\lambda = 1$ and using $v_t^0 = v_t^{\mathrm{ref}}$, $v_t^1 = v_t^\ast$ gives
\begin{equation*}
  v_t^\ast(x) - v_t^{\mathrm{ref}}(x)
  =
  \int_0^1
  \E\!\left[ r(X_0)\bigl((\epsilon - X_0) - v_t^\lambda(x)\bigr) \;\middle|\; X_t = x \right]
  \mathd\lambda,
  \qquad X_0 \sim p_0^\lambda,
\end{equation*}
which is~\eqref{eq:exact_tilt_integral}.

\subsection{Exact case: Gaussian reference under a linear reward}
\label{app:gaussian_case}

We show that for $p_0^{\mathrm{ref}}$ Gaussian and $r$ linear, the velocity-field family $\{v_t^\lambda\}_{\lambda \in [0,1]}$ from~\Cref{lem:tilt_path} is affine in~$\lambda$. The integrand of~\eqref{eq:exact_tilt_integral} is then constant in~$\lambda$, the right-endpoint rule is exact, and the RAM target~\eqref{eq:ram_reward_proxy_identity} matches the optimal adjoint at every~$x$.

Let $p_0^{\mathrm{ref}} = \mathcal N(\mu, \Sigma)$ with positive-definite~$\Sigma$ and $r(x) = b^\top x + c$. Completing the square in the log-density of $p_0^\lambda \propto p_0^{\mathrm{ref}}\, e^{\lambda r}$ gives
\begin{equation}
  p_0^\lambda
  =
  \mathcal N(\mu_\lambda,\, \Sigma),
  \qquad
  \mu_\lambda \coloneqq \mu + \lambda \Sigma b,
  \label{eq:gaussian_tilted_endpoint}
\end{equation}
a Gaussian whose mean shifts linearly in~$\lambda$ and whose covariance does not depend on~$\lambda$. Under the analytic noising kernel~\eqref{eq:noising_kernel}, the joint $(X_0, X_t)$ is Gaussian, and the conditional law of $X_0$ given $X_t = x$ is
\begin{equation}
  X_0 \mid X_t = x
  \;\sim\;
  \mathcal N\!\bigl(m_t^\lambda(x),\, S_t\bigr),
  \label{eq:gaussian_bridge_moments}
\end{equation}
with mean
\begin{equation*}
  m_t^\lambda(x)
  =
  \mu_\lambda + J_t (x - (1-t) \mu_\lambda),
  \qquad
  J_t \coloneqq (1-t)\,\Sigma\bigl((1-t)^2 \Sigma + t^2 I\bigr)^{-1},
\end{equation*}
and covariance $S_t = t^2 \bigl((1-t)^2 \Sigma + t^2 I\bigr)^{-1} \Sigma$ that does not depend on~$\lambda$. The conditional mean $m_t^\lambda(x)$ is affine in~$\lambda$.

Using $\epsilon = (X_t - (1-t) X_0)/t$, the velocity field~\eqref{eq:velocity_field} for endpoint $p_0^\lambda$ evaluates as
\begin{equation*}
  v_t^\lambda(x)
  =
  -\E[X_0 \mid X_t = x]
  +
  \frac{1}{t}\bigl(x - (1-t)\, \E[X_0 \mid X_t = x]\bigr)
  =
  \frac{x - m_t^\lambda(x)}{t},
\end{equation*}
which is affine in $m_t^\lambda(x)$ and therefore affine in~$\lambda$. Substituting into~\eqref{eq:tilt_velocity_ode} and applying~\Cref{lem:tilt_path} gives the claim.

A concrete consequence is that $v_t^\lambda(x) - v_t^{\mathrm{ref}}(x) = -\tfrac{1}{t}\bigl(I - (1-t) J_t\bigr)\,\lambda \Sigma b$ is independent of~$x$. Through the control--velocity relation~\eqref{eq:control_to_velocity}, the optimal control $u_t^\ast$ is therefore independent of position, recovering the path-cost-vanishing instance referenced at~\eqref{eq:ram_reward_proxy_identity}.

\section{Derivations for Retaining the Path-Cost Correction}
\label{app:path_cost_derivations}

This appendix collects the derivations behind \Cref{sec:path_cost}.
We begin with the endpoint-sampling jump estimator, then turn to the pathwise family.

\subsection{Random-jump identity and the jump estimator}
\label{app:endpoint_jump_derivation}

The jump estimator~\eqref{eq:jump_estimator} combines a single-point control-cost evaluation with a Bayes bridge score.
Both pieces are justified by the following lemma.

\begin{lemma}[Unbiased single-point cost estimate]
  \label{lem:random_jump}
  Let $s \sim \mathcal U[0,t)$ be independent of the trajectory.
  Then
  \begin{equation}
    \E\!\left[
      \frac{t}{2} \|u_s(X_s^u)\|^2
      \;\middle|\;
      X_t^u = x
    \right]
    =
    \frac{1}{2} \E\!\left[
      \int_0^t \|u_\tau(X_\tau^u)\|^2\,\mathd\tau
      \;\middle|\;
      X_t^u = x
    \right],
    \label{eq:random_jump_identity}
  \end{equation}
  and consequently
  \begin{equation}
    \E\!\left[
      r(X_0^u) - \frac{t}{2} \|u_s(X_s^u)\|^2
      \;\middle|\;
      X_t^u = x
    \right]
    =
    V_t^u(x).
    \label{eq:single_point_value}
  \end{equation}
\end{lemma}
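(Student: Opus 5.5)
The plan is to condition on the trajectory first and average over the random time $s$ second. Fix $t\in(0,1]$ and $x$, and work under the conditional law of the controlled trajectory $(X_\tau^u)_{\tau\in[0,t]}$ given $X_t^u=x$. Since $s\sim\mathcal U[0,t)$ is independent of the trajectory, the tower property lets us compute the inner expectation over $s$ with the path held fixed:
\begin{equation*}
  \E\!\left[\tfrac{t}{2}\|u_s(X_s^u)\|^2 \;\middle|\; (X_\tau^u)_{\tau\in[0,t]}\right]
  =
  \int_0^t \frac{1}{t}\cdot\frac{t}{2}\|u_\tau(X_\tau^u)\|^2\,\mathd\tau
  =
  \frac12\int_0^t \|u_\tau(X_\tau^u)\|^2\,\mathd\tau .
\end{equation*}
The density $1/t$ of $s$ cancels the prefactor $t$ exactly. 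Taking conditional expectation given $X_t^u=x$ then gives \eqref{eq:random_jump_identity}.

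The one point that needs care is interchanging the $s$-average with the trajectory expectation. I would justify this by Tonelli's theorem, which applies because the integrand is nonnegative. Joint measurability of $(s,\omega)\mapsto \|u_s(X_s^u(\omega))\|^2$ follows from measurability of $u$ and path-continuity of $X^u$. Finiteness comes from the admissibility assumption that $u$ has finite expected cost. The conditioning on $X_t^u=x$ is handled through regular conditional distributions, so the identity holds for almost every $x$ with respect to the law of $X_t^u$. Independence of $s$ from the trajectory is what makes the joint law a product, and this is exactly what licenses Tonelli.

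For \eqref{eq:single_point_value}, I would use linearity of conditional expectation, which applies because $r(X_0^u)$ is integrable. This splits the left-hand side into $\E[r(X_0^u)\mid X_t^u=x]$ minus the quantity just computed. By \eqref{eq:random_jump_identity} the second term equals $\tfrac12\E[\int_0^t\|u_\tau\|^2\,\mathd\tau \mid X_t^u=x]$. The difference is then precisely the definition \eqref{eq:value_function} of $V_t^u(x)$.

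There is no genuine obstacle in this argument. The only step requiring attention is the measure-theoretic justification of the Fubini/Tonelli swap under conditioning, and nonnegativity of the integrand settles it.
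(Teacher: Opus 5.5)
Your proposal is correct and follows essentially the same route as the paper: it conditions on $X_t^u = x$, exchanges the uniform average over $s$ with the conditional expectation via Fubini so that the density $1/t$ cancels the prefactor $t$, and then subtracts the result from the reward term in the definition \eqref{eq:value_function} of $V_t^u$. Your remarks on Tonelli (nonnegativity), joint measurability, and the independence of $s$ from the path make explicit the justification that the paper leaves implicit.
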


\begin{proof}
Conditioning on $X_t^u = x$ and applying Fubini's theorem,
\begin{equation*}
  \E\!\left[
    \frac{t}{2} \|u_s(X_s^u)\|^2
    \;\middle|\;
    X_t^u = x
  \right]
  =
  \frac{1}{2}
  \int_0^t
  \E\!\left[
    \|u_\tau(X_\tau^u)\|^2
    \;\middle|\;
    X_t^u = x
  \right]\mathd\tau,
\end{equation*}
which is~\eqref{eq:random_jump_identity}.
Subtracting from the reward term in the value function~\eqref{eq:value_function} gives~\eqref{eq:single_point_value}.
\end{proof}

We now derive the jump estimator.
Under \Cref{thm:optimal_controlled_process}, at initialization ($v^\theta = v^{\mathrm{ref}}$) and at the optimum, the marginals of the controlled process coincide with those of the forward process from the appropriate endpoint distribution.
The Markov property of the noising kernel then guarantees that the triple $(X_0, X_s, X_t)$ obtained by sampling $X_s \sim p_{s|0}(\cdot \mid X_0)$ and then $X_t \sim p_{t|s}(\cdot \mid X_s)$ has the same joint law as $(X_0^u, X_s^u, X_t^u)$ along the controlled process at those two points.

Conditional on $(X_0, X_s, X_t)$, the state $X_t$ enters the joint density only through the bridge $p_{s|t}^u$.
Combining this with~\eqref{eq:single_point_value} and the reasoning behind \eqref{eq:exact_adjoint_decomposition}, differentiating the value function in~$x$ produces
\begin{equation*}
  A_t^u(x)
  =
  \E\!\left[
    \Bigl(
      r(X_0^u) - \frac{t}{2} \|u_s(X_s^u)\|^2
    \Bigr)
    \nabla_{x_t}\!\log p_{s|t}^u(X_s^u \mid x_t)\big|_{x_t = x}
    \;\middle|\;
    X_t^u = x
  \right],
\end{equation*}
with the expectation over $s \sim \mathcal U[0,t)$ and the trajectory.
Replacing the controlled-process sample by the two-step construction above and substituting the Bayes bridge score~\eqref{eq:generalized_bayes_bridge_score} yields~\eqref{eq:jump_estimator}.

\subsection{Generalized adjoint identity}
\label{app:proof_generalized_ram}

We prove \Cref{thm:generalized_ram}.

\begin{proof}
Fix an admissible control~$u$ and $0 \le s < t \le 1$; assume the regularity required to interchange differentiation and conditional expectation.
The value function satisfies the recursion
\begin{equation}
  V_t^u(x)
  =
  \E\!\left[
    V_s^u(X_s^u)
    -
    \frac{1}{2}\int_s^t \|u_\tau(X_\tau^u)\|^2\,\mathd\tau
    \;\middle|\;
    X_t^u = x
  \right].
  \label{eq:value_recursion_app}
\end{equation}
We differentiate both sides in~$x$.
For the prefix, the conditional law of $X_s^u$ given $X_t^u = x$ has density $p_{s|t}^u(\cdot \mid x)$, so the score-function identity gives
\begin{equation*}
  \nabla_x \E\!\left[
    V_s^u(X_s^u)
    \;\middle|\;
    X_t^u = x
  \right]
  =
  \E\!\left[
    V_s^u(X_s^u) \nabla_x\!\log p_{s|t}^u(X_s^u \mid x)
    \;\middle|\;
    X_t^u = x
  \right].
\end{equation*}
For the suffix, we differentiate pathwise through the rollout.
Subtracting gives the right-hand side of~\eqref{eq:generalized_adjoint}, equal to $\nabla_x V_t^u(x)$ by the left-hand side.
\end{proof}

From a single SDE rollout we estimate the right-hand side as
\begin{equation*}
\widehat A_{s,t}^u = \widehat V_s\, \widehat S_{s,t} - \tfrac{1}{2}\, \widehat G_s,
\end{equation*}
where $\widehat V_s = r(X_0^u) - \tfrac{1}{2}\int_0^s \|u_\tau(X_\tau^u)\|^2\,\mathd\tau$ is the prefix value accumulated along the rollout, $\widehat S_{s,t}$ estimates the bridge score $\nabla_{x_t}\!\log p_{s|t}^u(X_s^u \mid x_t)$, and $\widehat G_s$ is the suffix path-cost gradient. The next subsections develop estimators for $\widehat S_{s,t}$ and the discrete-time machinery that produces $\widehat G_s$.

\subsection{Malliavin bridge-score estimator}
\label{app:malliavin_score_estimator}

We derive the Malliavin estimator $\widehat S_{s,t}$.

\begin{proposition}[Malliavin score estimator]
  \label{prop:malliavin_score}
  Let $(X_\tau^u)_{\tau \in [s,t]}$ be a segment of the controlled process~\eqref{eq:controlled_process}, and let $J_{\tau|s}^u$ denote the Jacobian flow
  \begin{equation}
    \mathd J_{\tau|s}^u
    =
    \nabla_x\!\bigl(
      b_\tau^{\mathrm{ref}} + \sigma_\tau u_\tau
    \bigr)(X_\tau^u) J_{\tau|s}^u\,\mathd\tau,
    \qquad
    J_{s|s}^u = I.
    \label{eq:jacobian_flow_app}
  \end{equation}
  Define
  \begin{equation}
    \widehat S_{s,t}
    \coloneqq
    \left(
      \int_s^t \sigma_\tau^{2}\,\mathd\tau
    \right)^{-1}
    \int_s^t (J_{\tau|s}^u)^\top \sigma_\tau\,\mathd B_\tau.
    \label{eq:malliavin_estimator_app}
  \end{equation}
  Then $\E\!\bigl[\widehat S_{s,t} \;\big|\; X_s^u, X_t^u\bigr] = \nabla_{x_t}\!\log p_{s|t}^u(X_s^u \mid x_t)|_{x_t = X_t^u}$.
\end{proposition}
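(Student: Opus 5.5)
The plan is a Bismut--Elworthy--Li argument. It has three parts: a Malliavin integration by parts turning a gradient of an expectation into an expectation weighted by a stochastic integral, a conditioning step, and a duality argument over test functions.

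\textbf{Setup.} The controlled process~\eqref{eq:controlled_process} is run in the generative direction. On the segment it starts at the state $X_t^u = x$ and ends at $X_s^u$. By the Markov property, the conditional density of the end state given the start state is exactly the transition density $p_{s|t}^u(\cdot \mid x)$. So for every bounded smooth test function $\varphi$,
\begin{equation*}
  \nabla_x \E\!\left[\varphi(X_s^u) \;\middle|\; X_t^u = x\right]
  =
  \int \varphi(y)\,\nabla_x\!\log p_{s|t}^u(y \mid x)\,p_{s|t}^u(y \mid x)\,\mathd y .
\end{equation*}
The proposition will follow once I show that the left-hand side also equals $\E[\varphi(X_s^u)\,\widehat S_{s,t} \mid X_t^u = x]$. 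Since $\varphi$ is arbitrary, disintegrating in $y$ then gives $\E[\widehat S_{s,t} \mid X_t^u = x, X_s^u = y] = \nabla_x\!\log p_{s|t}^u(y \mid x)$ for a.e.\ $y$, which is the claim. Throughout I read $J^u$ as the first-variation (Jacobian) flow of the segment from its initial state, composed along the direction of integration. I also use the chain rule $J_{\mathrm{end}|\tau}J_{\tau|\mathrm{start}} = J_{\mathrm{end}|\mathrm{start}}$.

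\textbf{Integration by parts.} First, differentiating pathwise gives
\begin{equation*}
  \nabla_x \E[\varphi(X_s^u)] = \E\!\left[(J^u_{\mathrm{end}|\mathrm{start}})^\top \nabla\varphi(X_s^u)\right].
\end{equation*}
Second, the Malliavin derivative of the end state with respect to the Brownian increment at time $\tau$ is $D_\tau X_s^u = J^u_{\mathrm{end}|\tau}\,\sigma_\tau$, since the noise enters additively with coefficient $\sigma_\tau$. Third, I choose the adapted matrix-valued direction
\begin{equation*}
  h_\tau = \sigma_\tau J^u_{\tau|\mathrm{start}} \Big/ \int_s^t \sigma_r^2\,\mathd r .
\end{equation*}
With this choice, the chain rule for $J^u$ gives
\begin{equation*}
  \int D_\tau X_s^u\, h_\tau\,\mathd\tau
  = J^u_{\mathrm{end}|\mathrm{start}} \int \sigma_\tau^2\,\mathd\tau \Big/ \int_s^t \sigma_r^2\,\mathd r
  = J^u_{\mathrm{end}|\mathrm{start}} .
\end{equation*}
Hence $\nabla\varphi(X_s^u)^\top J^u_{\mathrm{end}|\mathrm{start}} = \int D_\tau \varphi(X_s^u)\, h_\tau\,\mathd\tau$. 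Malliavin duality then moves the derivative onto the divergence of $h$. Because $h$ is adapted, that divergence is the It\^o integral $\int h_\tau^\top \mathd B_\tau$, which is exactly $\widehat S_{s,t}$ in~\eqref{eq:malliavin_estimator_app}. Put together, this gives $\nabla_x \E[\varphi(X_s^u)\mid X_t^u=x] = \E[\varphi(X_s^u)\,\widehat S_{s,t}\mid X_t^u=x]$.

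\textbf{Main obstacle.} I expect the main difficulty to be bookkeeping rather than analysis. The paper writes the dynamics with $\mathd t$ but integrates from $1$ down to $0$, so the ``start'' of the segment is at time $t$ and the ``end'' at time $s$. I must check that the Jacobian flow~\eqref{eq:jacobian_flow_app} and the stochastic integral are taken along the direction in which $B$ generates the filtration. Only then is $h$ adapted, so that the Skorokhod integral reduces to It\^o, and only then does the sign of the gradient term come out right. I would first verify the formula on the uncontrolled linear case. There $J^u$ is deterministic, $\widehat S_{s,t}$ is Gaussian, and the result can be compared against the closed-form Bayes bridge score~\eqref{eq:generalized_bayes_bridge_score}; this fixes the orientation.

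\textbf{Regularity.} The remaining hypotheses are standard: $C^1$ drift $b^{\mathrm{ref}}+\sigma u$ with bounded derivative on the segment, so that $J^u$ exists and has moments; $\int_s^t\sigma_\tau^2\,\mathd\tau>0$; and a smooth transition density. These justify the interchange of gradient and expectation and square-integrability of $\widehat S_{s,t}$. I would state them as assumptions rather than prove them.
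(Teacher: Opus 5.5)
Your proof is correct, but it takes a different route from the paper. The paper's proof is a one-line appeal to \citet[Proposition~2.4]{pidstrigach2025conditioning}, a bridge-score identity valid for an arbitrary non-decreasing weight $\alpha$, with integrand $\alpha'_\tau\sigma_\tau^{-1}J^\top\,\mathd B_\tau$. It then specializes to $\alpha'_\tau=\sigma_\tau^2$.

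You instead derive that special case from scratch by a Bismut--Elworthy--Li argument:
\begin{itemize}
\item pathwise differentiation in the initial state;
\item the Malliavin derivative $D_\tau X_s^u=J^u_{\mathrm{end}|\tau}\sigma_\tau$;
\item the adapted direction $h_\tau=\sigma_\tau J^u_{\tau|\mathrm{start}}/\int_s^t\sigma_r^2\,\mathd r$, which is exactly the paper's weight choice;
\item Skorokhod $=$ It\^o by adaptedness;
\item a test-function disintegration that upgrades conditioning on $X_t^u$ to conditioning on $(X_s^u,X_t^u)$.
\end{itemize}

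The citation buys brevity, the whole family of weights $\alpha$, and regularity conditions handled by the cited result. Your route buys self-containedness. It also never inverts $\sigma_\tau$, which is convenient here because $\sigma_0=0$ in the full-horizon case $s=0$. Finally, it forces the orientation to be stated explicitly.

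On orientation, your reading is the one under which the identity holds. $J^u$ must be the first-variation flow from the segment's starting state $X_t^u$, that is, initialized at $\tau=t$ in the paper's $1\to0$ convention. This is also what the discrete recursion in \Cref{alg:full_horizon} computes: unrolling it gives $S_i=\sum_{k\le i}(\partial X_k/\partial X_i)^\top\sigma_k\varepsilon_k$. Read literally with $J_{s|s}=I$, the displayed weight would be off by a factor. For a linear drift $b(x)=\beta x$ that factor is $e^{\beta(t-s)}$, which your proposed linear sanity check would catch.
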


\begin{proof}
We apply \citet[Proposition~2.4]{pidstrigach2025conditioning}: for a diffusion $\mathd X_\tau = f_\tau(X_\tau)\,\mathd\tau + \sigma_\tau\,\mathd B_\tau$ and any non-decreasing absolutely continuous $\alpha$ on $[s,t]$, the bridge score satisfies
\begin{equation}
  \nabla_{x_t}\!\log p_{s|t}(x_s \mid x_t)
  =
  \frac{1}{\alpha_t - \alpha_s} \E\!\left[
    \int_s^t \alpha'_\tau \sigma_\tau^{-1} J_{\tau|s}^\top\,\mathd B_\tau
    \;\middle|\;
    X_s = x_s,  X_t = x_t
  \right].
  \label{eq:pidstrigach_general}
\end{equation}
Applied to the controlled process with $\sigma_\tau$ as in \eqref{eq:controlled_process}, the choice $\alpha'_\tau = \sigma_\tau^{2}$ makes $\alpha_t - \alpha_s = \int_s^t \sigma_\tau^{2}\,\mathd\tau$ and gives the estimator~\eqref{eq:malliavin_estimator_app}.
\end{proof}

The weighting $\alpha'_\tau = \sigma_\tau^{2}$ is convenient because the integrand becomes $(J_{\tau|s}^u)^\top \sigma_\tau\,\mathd B_\tau$; in a discretized rollout this is exactly the realized solver noise.

In practice, rather than accumulating \eqref{eq:malliavin_estimator_app} forward in~$\tau$, we use the equivalent reverse-mode adjoint SDE indexed by a running time~$\tau$ and a fixed endpoint~$t$,
\begin{equation}
  \mathd S^{(t)}_\tau
  =
  -
  \nabla_x\!\bigl(
    b_\tau^{\mathrm{ref}} + \sigma_\tau u_\tau
  \bigr)(X_\tau^u)^\top S^{(t)}_\tau\,\mathd\tau
  -
  \sigma_\tau\,\mathd B_\tau,
  \qquad
  S^{(t)}_t = 0,
  \label{eq:generalized_stochastic_recursion}
\end{equation}
with $\widehat S_{s,t} = (\int_s^t \sigma_\tau^{2}\,\mathd\tau)^{-1} S^{(t)}_s$.
Only vector--Jacobian products through the drift are required, never explicit Jacobians.
The same dual-indexed form applies to the suffix running-cost gradient,
\begin{equation}
  \mathd G^{(t)}_\tau
  =
  -
  \nabla_x\!\bigl(
    b_\tau^{\mathrm{ref}} + \sigma_\tau u_\tau
  \bigr)(X_\tau^u)^\top G^{(t)}_\tau\,\mathd\tau
  -
  \nabla_x\!\|u_\tau(X_\tau^u)\|^2\,\mathd\tau,
  \qquad
  G^{(t)}_t = 0,
  \label{eq:generalized_path_recursion}
\end{equation}
with $\widehat G_s \coloneqq G^{(t)}_s$.
The full-horizon specialization in~\Cref{app:adjoint_recursions} produces all endpoint-indexed accumulators $S^{(t_i)}_0, G^{(t_i)}_0$ in a single backward sweep.

\subsection{Local one-step Gaussian score}
\label{app:local_gaussian_score}

Let $s = t{-}\delta$ and write $b_t^u \coloneqq b_t^{\mathrm{ref}} + \sigma_t u_t$.
A single Euler--Maruyama step of the controlled SDE~\eqref{eq:controlled_process} from~$t$ to~$t{-}\delta$ gives
\begin{equation*}
  X_{t-\delta}^u
  =
  X_t^u - b_t^u(X_t^u) \delta + \Delta W_t,
  \qquad
  \Delta W_t \sim \mathcal N\!\bigl(0,\,\sigma_t^{2}\delta I\bigr).
\end{equation*}
Conditional on $X_t^u = x$, this defines the Gaussian one-step transition $p_{t-\delta|t}^{u,\delta}(\cdot \mid x) = \mathcal N(x - b_t^u(x) \delta,\,\sigma_t^{2}\delta I)$, with log-density $-\|X_{t-\delta}^u - \mu(x)\|^2/(2\sigma_t^{2}\delta)$ up to a constant, where $\mu(x) = x - b_t^u(x) \delta$.
Differentiating in~$x$,
\begin{equation}
  \nabla_x\!\log p_{t-\delta|t}^{u,\delta}(X_{t-\delta}^u \mid x)
  =
  \frac{
    \bigl(I - \delta \nabla_x b_t^u(x)\bigr)^\top \Delta W_t
  }{\sigma_t^{2} \delta},
  \label{eq:local_gaussian_score}
\end{equation}
which, evaluated at $x = X_t^u$, is the one-step bridge score for the local estimator.
Its difference from the exact continuous-time bridge score is the $O(\delta)$ error already incurred by Euler--Maruyama.

\subsection{Cancellation of $\nabla_x u_t$ terms in the local estimator}
\label{app:lean_cancellation}

We show that the $\nabla_x u_t$ contributions from the score and the path-cost gradient cancel to first order in~$\delta$, yielding the local target~\eqref{eq:lean_target}.
We work at the fixed point $u_t = -\sigma_t \nabla_x V_t^u$.

Let $s = t{-}\delta$.
From~\eqref{eq:local_gaussian_score} with $\Delta W_t = \sigma_t \Delta B_t$,
\begin{equation}
  \nabla_{x_t}\!\log p_{s|t}^{u,\delta}(X_s^u \mid x_t)
  \big|_{x_t = X_t^u}
  =
  \frac{
    \bigl(I - \delta \nabla_x b_t^u(X_t^u)\bigr)^\top \Delta W_t
  }{\sigma_t^{2} \delta}.
  \label{eq:lean_local_score}
\end{equation}
Writing $\nabla_x b_t^u = \nabla_x b_t^{\mathrm{ref}} + \sigma_t \nabla_x u_t$, the part of the REINFORCE term involving $\nabla_x u_t$ is
\begin{equation*}
  -V_s^u(X_s^u)
  (\nabla_x u_t(X_t^u))^\top \Delta B_t.
\end{equation*}
The path-cost term on a single step is
\begin{equation*}
  -\frac{1}{2} \nabla_x\!\int_s^t \|u_\tau(X_\tau^u)\|^2\,\mathd\tau
  =
  -\delta (\nabla_x u_t(X_t^u))^\top u_t(X_t^u)
  +
  O(\delta^2).
\end{equation*}

To compare the two, expand the value function backward from~$t$ to~$s$.
At the fixed point, It\^o's formula combined with $u_t = -\sigma_t \nabla_x V_t^u$ gives
\begin{equation*}
  V_s^u(X_s^u)
  =
  V_t^u(X_t^u)
  -
  u_t(X_t^u)^\top \Delta B_t
  +
  O(\delta).
\end{equation*}
Substituting this into the $\nabla_x u_t$ contribution of the score and taking conditional expectation given~$X_t^u$, the first expansion term vanishes by martingale property while the second yields $\delta (\nabla_x u_t)^\top u_t$, using $\E[\Delta B_t \Delta B_t^\top \mid X_t^u] = \delta I$.
This cancels the path-cost term to first order in~$\delta$.

What remains is the reference-drift piece,
\begin{equation*}
  \widehat A_t^{\mathrm{local}}
  \approx
  \E\!\left[
    \frac{V_{t-\delta}^u(X_{t-\delta}^u)}
         {\sigma_t^{2} \delta} \bigl(I - \delta \nabla_x b_t^{\mathrm{ref}}(X_t^u)\bigr)^\top
    \Delta W_t
    \;\middle|\;
    X_t^u
  \right].
\end{equation*}
Replacing the conditional value by the prefix estimate $\widehat V_{t-\delta}$ and simplifying yields the continuous-time local target
\begin{equation}
  \widehat A_t^{\mathrm{local}}
  \approx
  \frac{\widehat V_{t-\delta}}
       {\int_{t-\delta}^t \sigma_\tau^{2}\,\mathd\tau} \bigl(I - \delta \nabla_x b_t^{\mathrm{ref}}(X_t^u)\bigr)^\top
  \Delta W_t,
  \qquad
  \Delta W_t = \sigma_t\sqrt{\delta}\,\xi_t,\quad
  \xi_t \;\sim\; \mathcal N\!\bigl(0,\,I\bigr).
  \label{eq:lean_target}
\end{equation}
Since $\nabla_x b_t^{\mathrm{ref}} = 2\nabla_x v_t^{\mathrm{ref}} - \kappa_t I$, the local target requires only a VJP through the frozen reference model.

\subsection{Discrete-time full-horizon training procedure}
\label{app:adjoint_recursions}

We turn the full-horizon ($s{=}0$) instance of the estimator $\widehat A_{s,t}^u$ into a concrete training procedure.
Each iteration rolls out the controlled SDE once and then computes targets for all interior grid points in a single discrete reverse-mode sweep, the discrete analogue of solving the continuous adjoint equations backward from every endpoint~$t_i$ to~$0$.
Throughout this subsection, we write $X_i \coloneqq X_{t_i}$ and $b_i^{u^\theta} \coloneqq b_{t_i}^{u^\theta}$ for pointwise values at grid points.
Expressions of the form $\nabla_x(f(X_i)^\top v)$ are read with the gradient taken in~$x$ and evaluated at~$x{=}X_i$; this scalar-backprop form is what the implementation actually uses and never materializes the $d \times d$ Jacobian~$\nabla_x f$.

\paragraph{SDE rollout.}
Fix a grid $0 = t_0 < t_1 < \cdots < t_K < 1$ with $\Delta t_i \coloneqq t_i - t_{i-1}$, and draw independent innovations $\varepsilon_i \sim \mathcal N(0,\,I)$ for $i = 1,\ldots,K$.
Each innovation is scaled by the exact step standard deviation
\begin{equation}
  \sigma_i^{2}
  \coloneqq
  \int_{t_{i-1}}^{t_i} \sigma_\tau^{2}\,\mathd\tau,
  \label{eq:exact_step_variance}
\end{equation}
which evaluates to $\sigma_i^{2} = 2\log\!\frac{1-t_{i-1}}{1-t_i} - 2\Delta t_i$.
The integrated variance~\eqref{eq:exact_step_variance} is finite for $t_i < 1$ but diverges at the noise endpoint, since $\sigma_t^2 \sim 2/(1-t)$ as $t \uparrow 1$.
We therefore take $t_K < 1$ (in practice $t_K = 1 - \delta$ for small $\delta$).

Let $m_i^\theta$ denote the deterministic part of the reverse step $t_i \to t_{i-1}$ under the current model, and $m_i^{\mathrm{ref}}$ the corresponding reference step map.
Starting from $X_K \sim \mathcal N(0,\,I)$, we roll out
\begin{equation}
  X_{i-1}
  =
  m_i^\theta(X_i) + \sigma_i \varepsilon_i,
  \label{eq:discrete_step_map}
\end{equation}
and store $\{X_i\}_{i=0}^K$ together with $\{\varepsilon_i\}_{i=1}^K$.
Rather than regressing the continuous control~$u_i$ directly, the implementation regresses the exact normalized mean shift of this discrete step, $(m_i^\theta(x) - m_i^{\mathrm{ref}}(x))/\sigma_i$.
For a plain Euler step this reduces to $m_i^\theta(x) = x - \Delta t_i b_i^{u^\theta}(x)$. In our experiments we instead use an exact integrator that admits a closed-form $m_i^\theta$.

\paragraph{Adjoint sweep.}
Let $S_i$ and $G_i$ denote the discrete adjoint accumulators associated with the interval~$[0,t_i]$.
Although the loop runs in increasing~$i$, it still represents backward propagation through the reverse solver: extending the horizon from~$t_{i-1}$ to~$t_i$ transports the previously accumulated adjoint through the transpose Jacobian of the step map and adds the new contribution from step~$i$.
The recursion is
\begin{align}
  S_i
  &=
  \nabla_x\!\bigl(m_i^\theta(X_i)^\top S_{i-1}\bigr)
  +
  \sigma_i \varepsilon_i,
  \label{eq:discrete_score_recursion}
  \\
  G_i
  &=
  \nabla_x\!\bigl(m_i^\theta(X_i)^\top G_{i-1}\bigr)
  +
  \nabla_x\!\left\|
    \frac{m_i^\theta(X_i)-m_i^{\mathrm{ref}}(X_i)}{\sigma_i}
  \right\|^2,
  \label{eq:discrete_pathcost_recursion}
\end{align}
for $i = 1,\ldots,K{-}1$, and the bridge-score and full-horizon adjoint targets at grid point~$t_i$ are
\begin{equation}
  \widehat S_i
  =
  \Bigl(\textstyle\sum_{k=1}^{i}\sigma_k^{2}\Bigr)^{-1} S_i,
  \qquad
  \widehat A_i
  =
  r(X_0)\,\widehat S_i - \tfrac{1}{2} G_i.
  \label{eq:full_horizon_target}
\end{equation}
Using the Bayes bridge score~\eqref{eq:generalized_bayes_bridge_score} instead of Malliavin replaces $\widehat S_i$ by its plug-in form on $(X_0, X_i)$ and leaves the path-cost sweep for $G_i$ unchanged.

\paragraph{Local variant.}
The local estimator~\eqref{eq:lean_target} reuses the same stored rollout with no multi-step sweep.
At each interior index~$i$,
\begin{equation*}
  \widehat A_i^{\mathrm{local}}
  =
  \frac{\widehat V_{i-1}}{\sigma_i} \nabla_x\!\bigl(m_i^{\mathrm{ref}}(X_i)^\top \varepsilon_i\bigr),
  \qquad
  \widehat V_{i-1}
  =
  r(X_0)
  -
  \tfrac{1}{2}
  \sum_{k=1}^{i-1}
  \left\|
    \frac{m_k^\theta(X_k)-m_k^{\mathrm{ref}}(X_k)}{\sigma_k}
  \right\|^2,
\end{equation*}
requiring a single backward pass through the frozen reference step map and no differentiation through the current policy.

\Cref{alg:full_horizon} gives the full-horizon Malliavin procedure.
Each iteration accumulates the exact discrete analogue of the velocity-space regression loss over the interior grid points, and the Bayes and local variants substitute their respective targets into the same rollout.

\begin{algorithm}[t]
  \caption{Full-horizon Malliavin RAM}
  \label{alg:full_horizon}
  \begin{algorithmic}[1]
    \Require parameters~$\theta$ (initialized from $v^{\mathrm{ref}}$),
      reward~$r$, grid $0{=}t_0 < \cdots < t_K{<}1$
    \While{not converged}
      \Statex \hspace{\algorithmicindent}%
        \Comment{\textbf{SDE rollout}}
      \State sample $X_K \sim \mathcal N(0,\,I)$
      \For{$i = K,\ldots,1$}
        \State sample $\varepsilon_i \sim \mathcal N(0,\,I)$
        \State $X_{i-1} \gets m_i^\theta(X_i) + \sigma_i \varepsilon_i$
      \EndFor
      \State $r_0 \gets r(X_0)$
      \Statex \hspace{\algorithmicindent}%
        \Comment{\textbf{Adjoint sweep and loss accumulation}}
      \State $S \gets 0$,  $G \gets 0$,  $L \gets 0$
      \For{$i = 1,\ldots,K{-}1$}
        \State $S \gets
          \nabla_x\!\bigl(m_i^\theta(X_i)^\top S\bigr)
          + \sigma_i \varepsilon_i$
        \State $G \gets
          \nabla_x\!\bigl(m_i^\theta(X_i)^\top G\bigr)
          + \nabla_x\!\left\|
            \frac{m_i^\theta(X_i)-m_i^{\mathrm{ref}}(X_i)}{\sigma_i}
          \right\|^2$
        \State $\widehat S \gets
          \bigl(\textstyle\sum_{k=1}^{i}\sigma_k^{2}\bigr)^{-1} S$
        \State $\widehat A \gets
          r_0\,\widehat S - \tfrac{1}{2} G$
        \State $L \gets L
          + \left\|
            \frac{m_i^\theta(X_i)-m_i^{\mathrm{ref}}(X_i)}{\sigma_i}
          - \mathrm{sg}\!\bigl(\sigma_i\,\widehat A\bigr)
          \right\|^2$
      \EndFor
      \State update $\theta$ with $\nabla_\theta L/(K{-}1)$
    \EndWhile
  \end{algorithmic}
\end{algorithm}

\section{Experimental Details}
\label{sec:experimental_details}

We adapt the general setup of prior work~\citep{liu2025flowgrpo,zheng2025diffusionnft,xue2025awm}.
We post-train Stable Diffusion 3.5 Medium (SD3.5M)~\citep{esser2024scaling} with LoRA (rank~$r=32$, scaling~$\alpha=64$).
We train all models on a cluster of 4 NVIDIA H100 GPUs with 96 GB of memory each.

\paragraph{Training steps.}
Each training step, we draw $48$ prompts, generate $24$ samples per prompt, evaluate the reward on every prompt-image pair, construct $K=8$ noisy training samples per image for which we compute the RAM loss, and perform a single optimizer step.
This means that the effective batch size (across all GPUs) is $48 \times 24 \times 8 = 9216$ per parameter update.
For a fair comparison, we use $48$ prompts per step for all baselines as well (AWM's reference implementation uses $72$); otherwise we retain the hyperparameters and implementation of each original work.

\paragraph{Per-step compute cost.}
\Cref{tab:gpu_hours} reports the average per-step compute cost of each method, measured in GPU-hours (wall-clock time multiplied by the four H100 GPUs we train on).
Per-step costs are comparable across methods, so RAM's step-count efficiency translates directly into wall-clock training time.

\begin{table}[h]
  \centering
  \caption{Average per-step training cost, in GPU-hours (training wall-clock time multiplied by four H100 GPUs).}
  \label{tab:gpu_hours}
  \small
  \begin{tabular}{@{}lccc@{}}
    \toprule
    \textbf{Method} & \textbf{GenEval} & \textbf{PickScore} & \textbf{OCR} \\
    \midrule
    Flow-GRPO & $0.701$ & $0.431$ & $0.498$ \\
    AWM & $0.653$ & $0.329$ & $0.370$ \\
    DiffusionNFT & $0.566$ & $0.254$ & $0.304$ \\
    RAM & $0.666$ & $0.399$ & $0.426$ \\
    \bottomrule
  \end{tabular}
\end{table}

\paragraph{Sampling and guidance.}
We use the default Euler sampler throughout, with $20$ steps during training and $40$ steps at evaluation.
Training uses classifier-free guidance with scale~$2.0$.
At evaluation we use the default SD3.5M scale of~$4.5$ for GenEval and OCR, and a lower scale of~$2.0$ for PickScore.
The lower scale for PickScore reflects its longer training run, during which guidance gets distilled into the model.

\paragraph{Optimizer.}
We use AdamW with learning rate~$3\mathrm{e}{-4}$, weight decay~$0.01$, and no learning-rate warmup.
We lower Adam's second-moment decay~$\beta_2$ from its default of~$0.999$ to~$0.95$.
This is a common choice for short training runs, as the smaller~$\beta_2$ lets the optimizer adapt more quickly to the gradient scale.

\paragraph{Exponential moving average.}
On-policy samples drawn during training come from an exponential moving average (EMA) of the model parameters with decay~$0.9$ and warmup rate~$0.01$.
Prior works use task-dependent EMA configurations; we use a single setting that works uniformly well across all three tasks.
Final evaluation uses an EMA with decay~$0.9$ and no warmup.

\paragraph{Reward coefficient.}
After group-relative normalization, we multiply the rewards by a fixed task-dependent coefficient: $100$ for GenEval and OCR, and $1000$ for PickScore.
Because reward scaling controls the regularization strength in our convention, this coefficient corresponds to $1/\texttt{kl\_weight}$ in methods that instead report an explicit KL-penalty weight.
The values were tuned to maximize the training reward while still scoring well on the image-quality metrics.

\paragraph{Timestep sampling.}
During training we sample the timestep~$t$ from the linear density~$p(t) = 2t$ on~$[0,1]$, biasing toward values near the noise endpoint~$t{=}1$ where generation begins.
This notably enhances training stability over uniform sampling.

\end{document}